\documentclass[12pt]{article}

  \setlength{\oddsidemargin}{0.25in}
  \setlength{\evensidemargin}{0.25in}
  \setlength{\marginparwidth}{0.07 true in}
  \setlength{\topmargin}{-0.5in}
  \addtolength{\headsep}{0.25in}
  \setlength{\textheight}{8.5 true in}
  \setlength{\textwidth}{6.0 true in}
\usepackage{hyperref}
\usepackage{color}
\usepackage[T1]{fontenc}    % use 8-bit T1 fonts
\usepackage{url}            % simple URL typesetting
\usepackage{booktabs}       % professional-quality tables
\usepackage{amsfonts}       % blackboard math symbols
\usepackage{nicefrac}       % compact symbols for 1/2, etc.
\usepackage{microtype}      % microtypography
\usepackage{microtype}
\usepackage{graphicx}
\usepackage{subfigure}
\usepackage{booktabs} % for professional tables
\usepackage{amsfonts}
\usepackage{bbm}
\usepackage{amssymb,amsmath,amsthm}
\usepackage{amsmath}
\usepackage{amsfonts}
\usepackage{mathrsfs}
\usepackage{amssymb}
\usepackage{amsthm}
\usepackage{wrapfig}
\newtheorem{definition}{Definition}
\newtheorem{proposition}{Proposition}
\usepackage{thmtools}
\usepackage{thm-restate}
\usepackage{hyperref}

\usepackage{cleveref}

\usepackage{graphicx,subfigure}
\usepackage{tikz}
\usepackage{comment}

\usepackage{epstopdf}

\usepackage{libertine}

% Including algorithm
\usepackage{algorithm}
\usepackage{algorithmic}

\usepackage[numbers]{natbib}
\newdimen\arrowsize
\pgfarrowsdeclare{arcsq}{arcsq}
{
  \arrowsize=0.2pt
  \advance\arrowsize by .5\pgflinewidth
  \pgfarrowsleftextend{-4\arrowsize-.5\pgflinewidth}
  \pgfarrowsrightextend{.5\pgflinewidth}
}
{
  \arrowsize=1.5pt
  \advance\arrowsize by .5\pgflinewidth
  \pgfsetdash{}{0pt} % do not dash
  \pgfsetroundjoin   % fix join
  \pgfsetroundcap    % fix cap
  \pgfpathmoveto{\pgfpoint{0\arrowsize}{0\arrowsize}}
  \pgfpatharc{-90}{-140}{4\arrowsize}
  \pgfusepathqstroke
  \pgfpathmoveto{\pgfpointorigin}
  \pgfpatharc{90}{140}{4\arrowsize}
  \pgfusepathqstroke
}

% \usepackage[
            % bookmarksnumbered,
            % bookmarksopen,
            % colorlinks,
            % linkcolor=black,
            % anchorcolor=black,
            % citecolor=black
            % ]{hyperref}

\newtheorem{corollary}{Corollary}

\newtheorem{assumption}{Assumption}
\usepackage{chngcntr}
\usepackage{apptools}
\AtAppendix{\counterwithin{theorem}{section}}
\newtheorem{theorem}{Theorem}
\AtAppendix{\counterwithin{lemma}{section}}
\newtheorem{lemma}{Lemma}
% \AtAppendix{\counterwithin{corollary}{section}}
% \newtheorem{corollary}{Corollary}
\AtAppendix{\counterwithin{equation}{section}}

\title{Theoretical Analysis of Adversarial Learning:\\ A Minimax Approach}

\font\myfont=cmr12 at 13pt

\author{\myfont Zhuozhuo~Tu\thanks{UBTECH Sydney AI Centre and the School of Computer Science in the Faculty of Engineering and Information Technologies at The University of Sydney, NSW, 2006, Australia, zhtu3055@uni.sydney.edu.au, zjin8228@uni.sydney.edu.au, dacheng.tao@sydney.edu.au.} \ \ \   Jingwei~Zhang\footnotemark[1] \ \ \    Dacheng~Tao\footnotemark[1]}

% \date{July 26, 2017}
\date{}

\begin{document}

\newcommand*\sqcitep[1]{{\setcitestyle{square}$\!\!$\citep{#1}}}

\maketitle

\abstract{
Here we propose a general theoretical method for analyzing the risk bound in the presence of adversaries. Specifically, we try to fit the adversarial learning problem into the minimax framework. We first show that the original adversarial learning problem can be reduced to a minimax statistical learning problem by introducing a transport map between distributions. Then, we prove a new risk bound for this minimax problem in terms of covering numbers under a weak version of Lipschitz condition. Our method can be applied to multi-class classification problems and commonly used loss functions such as the hinge and ramp losses. As some illustrative examples, we derive the adversarial risk bounds for SVMs, deep neural networks, and PCA, and our bounds have two data-dependent terms, which can be optimized for achieving adversarial robustness.
}
\newpage
\section{Introduction}
Machine learning models, especially deep neural networks, have achieved impressive performance across a variety of domains including image classification, natural language processing, and speech recognition. However, these techniques can easily be fooled by adversarial examples, i.e., carefully perturbed input samples aimed to cause misclassification during the test phase. This phenomenon was first studied in spam filtering \cite{dal,low,low2} and has attracted considerable attention since 2014, when \citet{sze} noticed that small perturbations in images can cause misclassification in neural network classifiers. Since then, there has been considerable focus on developing adversarial attacks against machine learning algorithms \cite{good,big,big1, athalye2018obfuscated, uesato2018adversarial}, and, in response, many defense mechanisms have also been proposed to counter these attacks \cite{gu,glob,dek, suggala2018adversarial, madry2018towards}. These works focus on creating optimization-based robust algorithms, but their generalization performance under adversarial input perturbations is still not fully understood.

\citet{sch} recently discussed the generalization problem in the adversarial setting and showed that the sample complexity of learning a specific distribution in the presence of $l_\infty$-bounded adversaries increases by an order of $\sqrt{d}$ for all classifiers. The same paper recognized that deriving the agnostic-distribution generalization bound remained an open problem \cite{sch}. In a subsequent study, \citet{cul} extended the standard PAC-learning framework to the adversarial setting by defining a corrupted hypothesis class and showed that the VC dimension of this corrupted hypothesis class for halfspace classifiers does not increase in the presence of an adversary. While their work provided a theoretical understanding of the problem of learning with adversaries, it had two limitations. First, their results could only be applied to binary problems, whereas in practice we usually need to handle multi-class problems. Second, the 0-1 loss function used in their work is not convex and thus very hard to optimize. 

In this paper, we propose a general theoretical method for analyzing generalization performance in the presence of adversaries. In particular, we attempt to fit the adversarial learning problem into the minimax framework \cite{lee}. In contrast to traditional statistical learning, where the underlying data distribution $P$ is unknown but fixed, the minimax framework considers the uncertainty about the distribution $P$ by introducing an ambiguity set and then aims to minimize the risk with respect to the worst-case distribution in this set. Motivated by \citet{lee}, we first note that the adversarial expected risk over a distribution $P$ is equivalent to the standard expected risk under a new distribution $P'$. Since this new distribution is not fixed and depends on the hypothesis, we instead consider the worst case. In this way, the original adversarial learning problem is reduced to a minimax problem, and we use the minimax approach to derive the risk bound for the adversarial expected risk. Our contributions can be summarized as follows.
\begin{itemize}
\item We propose a general method for analyzing the risk bound in the presence of adversaries. Our method is general in several respects. First, the adversary we consider is general and encompasses all $l_p$ bounded adversaries for $p\geq 1$. Second, our method can be applied to multi-class problems and other commonly used loss functions such as the hinge loss and ramp loss, whereas \citet{cul} only considered the binary classification problem and the 0-1 loss.
\item We prove a new bound for the local worst-case risk under a weak version of Lipschitz condition. Our bound is always better than that of \citet{lee2018minimax}, and can recover the standard non-adversarial risk bound by setting the radius $\epsilon_\mathcal{B}$ of the adversary to 0, whereas \citet{lee2018minimax} give a $\epsilon_\mathcal{B}$-free bound.
\item We derive the adversarial risk bounds for SVM, deep neural networks, and PCA. Our bounds have two data-dependent terms, suggesting that minimizing the sum of the two terms can help achieve adversarial robustness.
\end{itemize}

The remainder of this paper is structured as follows. In Section~\ref{sec2}, we discuss related works. Section~\ref{sec3} formally defines the problem, and we present our theoretical method in Section~\ref{sec4}. The adversarial risk bounds for SVM, neural networks, and PCA are described in Section~\ref{sec5}, and we conclude and discuss future directions in Section~\ref{sec6}.

\section{Related Work}
\label{sec2}
Our work leverages some of the benefits of statistical machine learning, summarized as follows.
\subsection{Generalization in Supervised Learning}
Generalization is a central problem in supervised learning, and the generalization capability of learning algorithms has been extensively studied. Here we review the salient aspects of generalization in supervised learning relevant to this work.

Two main approaches are used to analyze the generalization bound of a learning algorithm. The first is based on the complexity of the hypothesis class, such as the VC dimension \cite{vapnik2013nature,vapnik1999overview} for binary classification, Rademacher and Gaussian complexities \cite{bartlett2002rademacher, bartlett2005local},  and the covering number \cite{zhou2002covering, zhang2002covering,bartlett2017spectrally}. Note that hypothesis complexity-based analyses of generalization error are algorithm independent and consider the worst-case generalization over all functions in the hypothesis class. In contrast, the second approach is based on the properties of a learning algorithm and is therefore algorithm dependent. The properties characterizing the generalization of a learning algorithm include, for example, algorithmic stability \cite{bousquet2002stability, shalev2010learnability, liu2017algorithmic}, robustness \cite{xu2012robustness}, and algorithmic luckiness \cite{herbrich2002algorithmic}. Some other methods exist for analyzing the generalization error in machine learning such as the PAC-Bayesian approach \cite{neyshabur2017pac,ambroladze2007tighter}, compression-based bounds \cite{langford2005tutorial, arora2018stronger}, and information-theoretic approaches \cite{xu2017information, alabdulmohsin2015algorithmic, pmlr-v51-russo16, 2018arXiv180409060Z}.

\subsection{Minimax Statistical Learning}
In contrast to standard empirical risk minimization in supervised learning, where test data follow the same distribution as training data, minimax statistical learning arises in problems of distributionally robust learning \cite{far, gao, lee, lee2018minimax, sinha2018certifying} and minimizes the worst-case risk over a family of probability distributions. Thus, it can be applied to the learning setting in which the test data distribution differs from that of the training data, such as in domain adaptation and transfer learning \cite{courty2017optimal}.  In particular, \citet{gao} proposed a dual representation of worst-case risk over the ambiguity set of probability distributions, which was given by balls in Wasserstein space. Then, \citet{lee} derived the risk bound for minimax learning by exploiting the dual representation of worst-case risk proposed by \citet{gao}. However, the minimax risk bound proposed in \citet{lee} would go to infinity and thus become vacuous as $\epsilon_\mathcal{B}\rightarrow 0$. During the preparation of the initial draft of this paper, \citet{lee2018minimax} presented a new bound by imposing a Lipschitz assumption to avoid this problem. However, their new bound was $\epsilon_\mathcal{B}$-free and cannot recover the usual risk bound by setting $\epsilon_\mathcal{B}=0$. \citet{sinha2018certifying} also provided a similar upper bound on the worst-case population loss over distributions defined by a certain distributional Wasserstein distance, and their bound was efficiently computable by a principled adversarial training procedure and hence certified a level of robustness. However their training procedure required that the penalty parameter should be large enough and thus can only achieve a small amount of robustness. Here we improve on the results in \citet{lee,lee2018minimax} and present a new risk bound for the minimax problem.

\subsection{Learning with Adversaries}
The existence of adversaries during the test phase of a learning algorithm makes learning systems untrustworthy. There is extensive literature on analysis of adversarial robustness \cite{wang2017analyzing, fawzi2016robustness, hein2017formal, gilmer2018adversarial} and design of provable defense against adversarial attacks\cite{wong2018provable, raghunathan2018certified, madry2018towards, sinha2018certifying}, in contrast to the relatively limited literature on risk bound analysis of adversarial learning. A comprehensive review of works on adversarial machine learning can be found in \citet{biggio2018wild}. Concurrently to our work,  \citet{khim2018adversarial} and \citet{yin2018rademacher} provided different approaches for deriving adversarial risk bounds. \citet{khim2018adversarial} derived adversarial risk bounds for linear classifiers and neural networks using a method called function transformation. However, their approach can only be applied to binary classification. \citet{yin2018rademacher} gave similar adversarial risk bounds as \citet{khim2018adversarial} through the lens of Rademacher complexity. Although they provided risk bounds in multi-class setting, their work focused on $l_\infty$ adversarial attacks and was limited to one-hidden layer ReLU neural networks. After the initial preprint of this paper, \citet{khim2018badversarial} extended their method to multi-class setting at the expense of incurring an extra factor of the number of classes in their bound. In contrast, our multi-class bound does not have explicit dependence on this number. We hope that our method can provide new insight into analysis of the adversarial risk bounds.
%method is more general, and the proof technique is also relatively simpler. %with no explicit dependence on the number of classes. 0-1 loss in Poling-loh Recently, \cite{cul} analyzed the VC dimension in the binary classification setting with evasion adversaries. However, their analysis had some limitations, as they only considered the case of $0$-$1$ loss with evasion adversaries in binary classification. We propose a general method that can be applied to multi-class problems and other commonly used loss functions.

\section{Problem Setup}\label{sec3}
We consider a standard statistical learning framework. Let $\mathcal{Z}=\mathcal{X}\times\mathcal{Y}$ be a measurable instance space where $\mathcal{X}$ and $\mathcal{Y}$ represent feature and label spaces, respectively. We assume that examples are independently and identically distributed according to some fixed but unknown distribution $P$. The learning problem is then formulated as follows. The learner considers a class $\mathcal{H}$ of hypothesis $h:\mathcal{X}\rightarrow \mathcal{Y}$ and a loss function $l: \mathcal{Y}\times \mathcal{Y}\rightarrow \mathbb{R}_{+}$. The learner receives $n$ training examples denoted by $S=((x_1,y_1),(x_2,y_2),\cdots,(x_n,y_n))$ drawn i.i.d. from $P$ and tries to select a hypothesis $h\in\mathcal{H}$ that has a small expected risk. However, in the presence of adversaries, there will be imperceptible perturbations to the input of examples, which are called adversarial examples. We assume that the adversarial examples are generated by adversarially choosing an example from neighborhood $N(x)$. We require $N(x)$ to be nonempty and that some choice of examples is always available. Throughout this paper, we assume that $N(x)=\{x':x'-x\in \mathcal{B}\}$, where $\mathcal{B}$ is a nonempty, closed, convex, origin-symmetric set. Note that the definition of $\mathcal{B}$ is very general and encompasses all $l_p$ -bounded adversaries when $p\geq1$. We next give the formal definition of adversarial expected and empirical risk to measure the learner's performance in the presence of adversaries. %/The seminorm derived from $\mathcal{B}$ is $||x||_{\mathcal{B}}=\inf\{\epsilon\in \mathbb{R}_{\geq0}: x\in \epsilon \mathcal{B}\}$ and the associated distance $d_\mathcal{B}(x,y)=||x-y||_{\mathcal{B}}$%/. 
\begin{definition}
(Adversarial Expected Risk). The adversarial expected risk of a hypothesis $h\in\mathcal{H}$ over the distribution $P$ in the presence of an adversary constrained by $\mathcal{B}$ is
$$R_P(h,\mathcal{B})=\mathbb{E}_{(x,y)\sim P}[\max_{x'\in N(x)}l(h(x'),y)].$$
\end{definition}
If $\mathcal{B}$ is the zero-dimensional space $\{\bold{0}\}$, then the adversarial expected risk will reduce to the standard expected risk without an adversary. 
%Let $h*=\argmin_{h\in \mathcal{H}}L_P(h,\mathcal{B})$. Then, learning is possible if there exists a learning algorithm that outputs $\hat{h}_n$ such that $L_P(\hat{h}_n)-L_P(h*)\rightarrow 0$.
Since the true distribution is usually unknown, we instead use the empirical distribution to approximate the true distribution, which is equal to $(x_i,y_i)$ with probability $1/n$ for each $i\in\{1,\cdots, n\}$. That gives us the following definition of adversarial empirical risk.
\begin{definition}
(Adversarial Empirical Risk ). The adversarial empirical risk of $h$ in the presence of an adversary constrained by $\mathcal{B}$ is
$$R_{P_n}(h,\mathcal{B})=\mathbb{E}_{(x,y)\sim P_n}[\max_{x'\in N(x)}l(h(x'),y)],$$
where $P_n$ represents the empirical distribution.
\end{definition}
In the next section, we derive the adversarial risk bounds.
%While it is obvious that the presence of an adversary leads to a decrease in the performance of the learner, we are now interested in the effect of an adversary on the generalization performance of a learner.
%\begin{definition}
%(Learnability) A hypothesis class $\mathcal{H}$ is learnable by empirical risk minimization in the presence of an evasion adversary constrained by $\mathcal{B}$ if there is a function poly(., .) such that for any $\epsilon>0$ and $\delta>0$, for all distributions P on Z, the following holds for any sample size $m\geq poly(.,.)$
%$$P[\{(x,y):L_P(AERM_{\mathcal{H},\mathcal{B}},\mathcal{B})-\inf_{h\in\mathcal{H}}L_P(h,\mathcal{B})\leq \epsilon\}]\geq 1-\delta$$
%\end{definition}
\section{Main Results}\label{sec4}
In this section, we present our main results. The trick is to pushforward the original distribution $P$ into a new distribution $P'$ using a transport map $T_h:\mathcal{Z}\rightarrow\mathcal{Z}$ satisfying
$$R_P(h,\mathcal{B})=R_{P'}(h),$$ where $R_{P'}(h)=\mathbb{E}_{(x,y)\sim P'}l(h(x),y)$ is the standard expected risk without the adversary. Therefore, an upper bound on the expected risk over the new distribution leads to an upper bound on the adversarial expected risk. 

Note that the new distribution $P'$ is not fixed and depends on the hypothesis $h$. As a result, traditional statistical learning cannot be directly applied. However, note that these new distributions lie within a Wasserstein ball centered on $P$. If we consider the worst case within this Wasserstein ball, then the original adversarial learning problem can be reduced to a minimax problem. We can therefore use the minimax approach to derive the adversarial risk bound. We first introduce the Wasserstein distance and minimax framework.
\subsection{Wasserstein Distance and Local Worst-case Risk}\label{subs}
Let $(\mathcal{Z},d_\mathcal{Z})$ be a metric space where $\mathcal{Z}=\mathcal{X}\times\mathcal{Y}$ and $d_{\mathcal{Z}}$ is defined as
$$d_{\mathcal{Z}}^p(z,z')=d_{\mathcal{Z}}^p((x,y),(x',y'))=(d_{\mathcal{X}}^p(x,x')+d_{\mathcal{Y}}^p(y,y'))$$
with $d_{\mathcal{X}}$ and $d_{\mathcal{Y}}$ representing the metric in the feature space and label space respectively. For example, if $\mathcal{Y}=\{1,-1\}$, $d_{\mathcal{Y}}(y,y')$ can be $\mathbbm{1}_{(y\neq y')}$, and if $\mathcal{Y}=[-B,B]$, $d_{\mathcal{Y}}(y,y')$ can be $(y-y')^2$. In this paper, we require that $d_\mathcal{X}$ is translation invariant, i.e., $d_\mathcal{X}(x,x')=d_\mathcal{X}(x-x',0)$. With this metric, we denote with $\mathcal{P}(\mathcal{Z})$ the space of all Borel probability measures on $\mathcal{Z}$, and with $\mathcal{P}_p(\mathcal{Z})$ the space of all $P\in \mathcal{P}(\mathcal{Z})$ with finite $p$th moments for $p\geq 1$:
$$\mathcal{P}_p(\mathcal{Z}):=\{P\in \mathcal{P}(\mathcal{Z}):\mathbb{E}_{P}[d_{\mathcal{Z}}^p(z,z_0)]< \infty \ for\ z_0\in \mathcal{Z}\}.$$
Then, the $p$th Wasserstein distance between two probability measures $P, Q\in \mathcal{P}_p(\mathcal{Z})$ is defined as

$$W_p(P,Q):=\inf_{M\in\Gamma(P,Q)}(\mathbb{E}_M[d_{\mathcal{Z}}^p(z,z')])^{1/p},$$
where $\Gamma(P,Q)$ denotes the collection of all measures on $\mathcal{Z}\times \mathcal{Z}$ with marginals P and Q on the first and second factors, respectively.

Now we define the local worst-case risk of $h$ at $P$,
$$R_{\epsilon,p}(P,h):=\sup_{Q\in B_{\epsilon,p}^W(P)}R_Q(h),$$
where $B_{\epsilon,p}^W(P):=\{Q\in \mathcal{P}_p(Z): W_p(P,Q))\leq \epsilon\}$ is the $p$-Wasserstein ball of radius $\epsilon\geq0$ centered at $P$. 

With these definitions, we next show the adversarial expected risk can be related to the local worst-case risk by a transport map $T_h$.
\subsection{Transport Map}
Define a mapping $T_h:\mathcal{Z}\rightarrow\mathcal{Z}$
$$z=(x,y)\rightarrow(x^*,y),$$
where $x^*=\arg\max_{x'\in N(x)}l(h(x'),y)$. By the definition of $d_\mathcal{Z}$, it is easy to obtain $d_\mathcal{Z}((x,y),(x^*,y))=d_\mathcal{X}(x,x^*)$. We now prove that the adversarial expected risk can be related to the standard expected risk via the mapping $T_h$.
\begin{lemma}
Let $P'=T_h\#P$, the pushforward of $P$ by $T_h$, then we have 
$$R_P(h,\mathcal{B})=R_{P'}(h).$$
\end{lemma}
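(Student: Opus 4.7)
The plan is to apply the standard change-of-variables identity for pushforward measures directly. By the definition $P' = T_h \# P$, for any integrable $f:\mathcal{Z}\to\mathbb{R}$ one has $\mathbb{E}_{z \sim P'}[f(z)] = \mathbb{E}_{z \sim P}[f(T_h(z))]$. Specializing to $f(x,y) = l(h(x),y)$ immediately gives
\[
R_{P'}(h) = \mathbb{E}_{(x,y)\sim P'}[l(h(x),y)] = \mathbb{E}_{(x,y)\sim P}\!\left[l\!\left(h(x^*),y\right)\right],
\]
where $x^* = \arg\max_{x' \in N(x)} l(h(x'),y)$ is the first coordinate of $T_h(x,y)$.

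Next, by the definition of $x^*$ as an argmax, $l(h(x^*),y) = \max_{x' \in N(x)} l(h(x'),y)$. Plugging this into the previous display and comparing with the definition of the adversarial expected risk yields
\[
R_{P'}(h) = \mathbb{E}_{(x,y)\sim P}\!\left[\max_{x' \in N(x)} l(h(x'),y)\right] = R_P(h,\mathcal{B}),
\]
which is the claimed identity.

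The only nontrivial point is that $T_h$ must be a well-defined measurable map so that the pushforward $T_h \# P$ actually exists. This requires (i) the supremum $\max_{x' \in N(x)} l(h(x'),y)$ to be attained for every $(x,y)$, and (ii) a measurable choice of maximizer $x^*(x,y)$. Attainment is immediate under mild regularity, since $N(x) = x + \mathcal{B}$ is closed and, in the typical setting of interest, bounded (hence compact in $\mathbb{R}^d$), while $x' \mapsto l(h(x'),y)$ is upper semicontinuous; measurability of the selection then follows from a standard measurable selection theorem applied to the set-valued map $(x,y) \mapsto \arg\max_{x' \in N(x)} l(h(x'),y)$. I expect this measurability/existence check to be the only technical obstacle, and the rest of the argument to be a one-line application of the change-of-variables formula.
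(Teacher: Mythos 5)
Your proof is correct and follows essentially the same route as the paper: apply the change-of-variables identity for the pushforward $T_h\#P$ to the function $(x,y)\mapsto l(h(x),y)$ and use that $l(h(x^*),y)=\max_{x'\in N(x)}l(h(x'),y)$ by definition of the argmax. Your additional remark on attainment of the maximum and measurable selection of $x^*$ addresses a technical point the paper leaves implicit, but it does not change the argument.
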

\begin{proof}
By the definition, we have
\begin{equation*} 
\begin{array}{ll}
R_P(h,\mathcal{B})=\mathbb{E}_{(x,y)\sim P}[\max_{x'\in N(x)}l(h(x'),y)]\\
=\mathbb{E}_{(x,y)\sim P}[l(h(x^*),y)]=\mathbb{E}_{(x,y)\sim P'}[l(h(x),y)]\\
\end{array}
\end{equation*}
So $R_P(h,\mathcal{B})=R_{P'}(h)$.
\end{proof}

By this lemma, the adversarial expected risk over a distribution $P$ is equivalent to the standard expected risk over a new distribution $P'$. However since the new distribution is not fixed and depends on the hypothesis $h$, traditional statistical learning cannot be directly applied. Luckily, the following lemma proves that all these new distributions locate within a Wasserstein ball centered at $P$.
\begin{lemma}
Define the radius of the adversary $\mathcal{B}$ as $\epsilon_\mathcal{B}:=\sup_{x\in\mathcal{B}}d_{\mathcal{X}}(x,0)$. For any hypothesis $h$ and the corresponding $P'=T_h\#P$, we have
$$W_p(P,P')\leq \epsilon_\mathcal{B}.$$
\end{lemma}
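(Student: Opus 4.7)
The plan is to bound the Wasserstein distance $W_p(P,P')$ by exhibiting an explicit coupling of $P$ and $P'$ rather than taking the infimum abstractly. Since $P' = T_h \# P$, the most natural coupling is the one supported on the graph of $T_h$: let $M$ be the pushforward of $P$ under the map $z \mapsto (z, T_h(z))$ from $\mathcal{Z}$ to $\mathcal{Z}\times\mathcal{Z}$. Then by construction $M$ has first marginal $P$ and second marginal $T_h \# P = P'$, so $M \in \Gamma(P,P')$, and
\[
W_p^p(P,P') \;\leq\; \mathbb{E}_M[d_{\mathcal{Z}}^p(z,z')] \;=\; \mathbb{E}_{(x,y)\sim P}\bigl[d_{\mathcal{Z}}^p((x,y),(x^*,y))\bigr].
\]

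Next I would use the definition of $d_\mathcal{Z}$. Since the second coordinate is unchanged under $T_h$, $d_\mathcal{Y}(y,y)=0$, and therefore $d_\mathcal{Z}((x,y),(x^*,y)) = d_\mathcal{X}(x,x^*)$. Invoking the translation-invariance of $d_\mathcal{X}$ stated in Section~\ref{subs}, I rewrite $d_\mathcal{X}(x,x^*) = d_\mathcal{X}(x-x^*,0)$. Because $x^* \in N(x)$ we have $x^*-x \in \mathcal{B}$, and since $\mathcal{B}$ is origin-symmetric this implies $x-x^* \in \mathcal{B}$ as well. Hence
\[
d_\mathcal{X}(x,x^*) \;=\; d_\mathcal{X}(x-x^*,0) \;\leq\; \sup_{v\in\mathcal{B}} d_\mathcal{X}(v,0) \;=\; \epsilon_\mathcal{B}
\]
pointwise in $x$. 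Taking expectation and then the $p$th root yields $W_p(P,P') \leq \epsilon_\mathcal{B}$, which is the desired conclusion.

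The only genuine subtlety, rather than an obstacle, is to make sure the coupling $M$ is well-defined, which requires $T_h$ to be measurable; this is implicit in the assumption that $x^* = \arg\max_{x'\in N(x)} l(h(x'),y)$ is attained (the maximizer exists because $N(x)$ is closed and bounded when $\mathcal{B}$ is, and one picks a measurable selection). Otherwise the argument is essentially a one-line verification of the definition of the Wasserstein distance combined with the two structural hypotheses on $\mathcal{B}$ (origin-symmetry and boundedness captured by $\epsilon_\mathcal{B}$) and $d_\mathcal{X}$ (translation invariance); there are no nontrivial estimates to carry out.
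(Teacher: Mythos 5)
Your proof is correct and follows essentially the same route as the paper: the paper's one-line bound $W_p^p(P,P')\leq \mathbb{E}_P[d_{\mathcal{Z}}^p(Z,T_h(Z))]$ is exactly the coupling supported on the graph of $T_h$ that you make explicit, followed by the same use of translation invariance and the definition of $\epsilon_\mathcal{B}$. Your additional remarks on measurability and origin-symmetry only spell out details the paper leaves implicit.
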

\begin{proof}
By the definition of Wasserstein distance, 
$$W_p^p(P,P')\leq \mathbb{E}_P[d^p_{\mathcal{Z}}(Z,T_h(Z))]=\mathbb{E}_P[d^p_{\mathcal{\mathcal{X}}}(x,x^*)]\leq \epsilon_\mathcal{B}^p,$$
where the last inequality uses the translation invariant property of $d_\mathcal{X}$. Therefore, we have $W_p(P,P')\leq \epsilon_\mathcal{B}$.
\end{proof}

From this lemma, we can see that all possible new distributions lie within a Wasserstein ball of radius $\epsilon_\mathcal{B}$ centered on $P$. So, by upper bounding the worst-case risk in the ball, we can bound the adversarial expected risk. The relationship between local worst-case risk and adversarial expected risk is as follows. Note that this inequality holds for any $p\geq1$. So, in the rest of the paper, we only discuss the case $p=1$; that is,
\begin{equation}\label{e1}
R_{P}(h, \mathcal{B})\leq R_{\epsilon_\mathcal{B},1}(P,h), \quad \forall h\in \mathcal{H}.
\end{equation}
\subsection{Adversarial Risk Bounds}
In this subsection, we first prove a bound for the local worst-case risk. Then, the adversarial risk bounds can be derived directly by (\ref{e1}). For the convenience of our discussion, we denote a function class $\mathcal{F}$ by compositing the functions in $\mathcal{H}$ with the loss function $l(\cdot,\cdot)$, i.e., $\mathcal{F}=\{(x,y)\rightarrow l(h(x),y): h\in \mathcal{H}\}$. The key ingredient of a bound on the local worst-case risk is the following strong duality result after \citet{gao}:
\begin{proposition}
For any upper semicontinuous function $f:\mathcal{Z}\rightarrow\mathbb{R}$ and for any $P\in \mathcal{P}_p(\mathcal{Z})$,
$$R_{\epsilon_\mathcal{B},1}(P,f)=\min_{\lambda\geq0}\{\lambda \epsilon_\mathcal{B}+\mathbb{E}_P[\varphi_{\lambda,f}(z)]\},$$
where $\varphi_{\lambda,f}(z):=\sup_{z'\in\mathcal{Z}}\{f(z')-\lambda\cdot d_\mathcal{Z}(z,z')\}$.
\end{proposition}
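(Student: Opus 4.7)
The plan is to interpret $R_{\epsilon_\mathcal{B},1}(P,f)$ as a constrained supremum over probability measures and then dualize the Wasserstein-ball constraint with a Lagrange multiplier $\lambda\ge 0$. Concretely, by definition
\begin{equation*}
R_{\epsilon_\mathcal{B},1}(P,f)=\sup_{Q\in\mathcal{P}_p(\mathcal{Z})}\bigl\{\mathbb{E}_Q[f]\;:\;W_1(P,Q)\le \epsilon_\mathcal{B}\bigr\}.
\end{equation*}
Introducing the Lagrangian $L(Q,\lambda)=\mathbb{E}_Q[f]-\lambda\bigl(W_1(P,Q)-\epsilon_\mathcal{B}\bigr)$, weak duality immediately gives
\begin{equation*}
R_{\epsilon_\mathcal{B},1}(P,f)\le \inf_{\lambda\ge 0}\sup_{Q}L(Q,\lambda)=\inf_{\lambda\ge 0}\Bigl\{\lambda\epsilon_\mathcal{B}+\sup_{Q\in\mathcal{P}_p(\mathcal{Z})}\bigl(\mathbb{E}_Q[f]-\lambda W_1(P,Q)\bigr)\Bigr\}.
\end{equation*}

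Next I would simplify the inner supremum by rewriting it over couplings. Using the primal definition of $W_1$ and observing that the marginal $Q$ is determined by the second marginal of any coupling $\pi\in\Gamma(P,Q)$, one obtains
\begin{equation*}
\sup_{Q}\bigl\{\mathbb{E}_Q[f]-\lambda W_1(P,Q)\bigr\}=\sup_{\pi:\ \pi_1=P}\mathbb{E}_\pi\bigl[f(z')-\lambda\, d_\mathcal{Z}(z,z')\bigr].
\end{equation*}
Since the $z$-marginal is fixed to $P$ and the cost in the bracket depends only on the pair $(z,z')$, the optimal $\pi$ is obtained by optimizing $z'$ conditionally on each $z$, i.e.\ by picking a measurable selector achieving $\sup_{z'}\{f(z')-\lambda d_\mathcal{Z}(z,z')\}=\varphi_{\lambda,f}(z)$; upper semicontinuity of $f$ guarantees measurability of $\varphi_{\lambda,f}$ and existence of such a selector (or an approximating sequence). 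Interchanging sup and integration then yields
\begin{equation*}
\sup_{\pi:\ \pi_1=P}\mathbb{E}_\pi\bigl[f(z')-\lambda d_\mathcal{Z}(z,z')\bigr]=\mathbb{E}_P[\varphi_{\lambda,f}(z)],
\end{equation*}
so the weak-duality bound becomes exactly the claimed right-hand side and the infimum is attained on $[0,\infty)$ because the objective is convex and coercive in $\lambda$ (tending to $+\infty$ as $\lambda\to\infty$ unless $\varphi_{\lambda,f}$ collapses early), justifying writing $\min$ in place of $\inf$.

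The remaining and main obstacle is to upgrade weak duality to strong duality, i.e.\ to show the reverse inequality. This requires a minimax-swap argument on the Lagrangian, but the sup over probability measures is neither compact nor finite-dimensional, so Sion's theorem does not apply directly. The standard remedy, which I would invoke rather than re-derive, is the Kantorovich-type strong duality for Wasserstein distributionally robust optimization established in Theorem 1 of \citet{gao}: under upper semicontinuity of $f$ and the growth condition implicit in $P\in\mathcal{P}_p(\mathcal{Z})$, the supremum over the Wasserstein ball coincides with the dual minimization over $\lambda\ge 0$. Specializing that result to the cost $d_\mathcal{Z}$ and to $W_1$ with $\epsilon=\epsilon_\mathcal{B}$ reproduces precisely the claimed identity, and this is the step where all the topological work (measurable selection, attainability of $\min$, and interchange of sup and integral) is actually needed; everything else in the argument is the routine Lagrangian computation sketched above.
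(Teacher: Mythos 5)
Your proposal is correct and ultimately rests on the same foundation as the paper: the paper gives no independent proof of this proposition but simply imports the strong duality theorem of \citet{gao}, which is exactly where you delegate the hard reverse inequality. Your additional weak-duality Lagrangian computation and the reduction of the inner supremum to $\mathbb{E}_P[\varphi_{\lambda,f}]$ are sound (modulo the measurable-selection details you flag), but the essential content — including attainment of the minimum over $\lambda\geq 0$ — is, as in the paper, supplied by the cited result rather than re-derived.
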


We begin with some assumptions.
\begin{assumption}\label{ass1}
The instance space $\mathcal{Z}$ is bounded: $diam(\mathcal{Z}):=\sup_{z,z'\in \mathcal{Z}}d_\mathcal{Z}(z,z')< \infty$.
\end{assumption}
\begin{assumption}\label{ass2}
The functions in $\mathcal{F}$ are upper semicontinuous and uniformly bounded: $0\leq f(z)\leq M<\infty$ for all $f\in\mathcal{F}$ and $z\in \mathcal{Z}$.
\end{assumption}
\begin{assumption}\label{ass3}
For any function $f\in\mathcal{F}$ and any $z\in\mathcal{Z}$, there exists a constant $\lambda$ such that 
$f(z')-f(z)\leq\lambda d_{\mathcal{Z}}(z,z')$ for any $z'\in\mathcal{Z}$.
\end{assumption}
Note that Assumption 3 is a weak version of Lipschitz condition since the constant $\lambda$ is not fixed and depends on $f$ and $z$. It is easy to see that if the function $f\in\mathcal{F}$ is $L$-Lipschitz with respect to the metric $d_\mathcal{Z}$, i.e., $|f(z)-f(z')|\leq Ld_\mathcal{Z}(z,z')$, Assumption 3 automatically holds with $\lambda$ always being $L$. Assumption 3 is very straightforward. But it is not easy to use for our proof. For this sake, we give an equivalent expression to Assumption 3 in the following lemma.
\begin{lemma}
Assumption 3 holds if and only if for any function $f\in\mathcal{F}$ and any empirical distribution $P_n$, the set $\{\lambda: \psi_{f,P_n}(\lambda)=0\}$ is nonempty, where $\psi_{f,P_n}(\lambda):=\mathbb{E}_{P_n}(\sup_{z'\in \mathcal{Z}}\{f(z')-\lambda d_{\mathcal{Z}}(z,z')-f(z)\})$. 
\end{lemma}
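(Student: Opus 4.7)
The plan is to prove the biconditional by attacking each direction separately. A preliminary observation shared by both directions is that plugging $z' = z$ into the supremum defining $\psi_{f,P_n}(\lambda)$ yields $f(z) - f(z) - \lambda \cdot 0 = 0$, so for any $\lambda \geq 0$ the inner supremum is always nonnegative. Therefore $\psi_{f,P_n}(\lambda) \geq 0$, and it equals $0$ if and only if the inner supremum vanishes at every sample point of $P_n$ (up to the probability-zero set). A second useful monotonicity: for fixed $z, z'$, the quantity $f(z') - f(z) - \lambda d_\mathcal{Z}(z,z')$ is nonincreasing in $\lambda$ since $d_\mathcal{Z}(z,z') \geq 0$, so enlarging $\lambda$ can only tighten the inequality $f(z')-f(z) \leq \lambda d_\mathcal{Z}(z,z')$.

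For the forward direction, fix $f \in \mathcal{F}$ and an empirical distribution $P_n = \frac{1}{n}\sum_{i=1}^n \delta_{z_i}$. Assumption 3 supplies for each $i$ a constant $\lambda_i$ satisfying $f(z') - f(z_i) \leq \lambda_i\, d_\mathcal{Z}(z_i, z')$ for every $z' \in \mathcal{Z}$. By the monotonicity above, I may assume $\lambda_i \geq 0$ (replacing it by its nonnegative part if necessary). Set $\lambda := \max_{1 \leq i \leq n} \lambda_i$. Then the same inequality holds with this uniform $\lambda$ at each $z_i$, so $\sup_{z'}\{f(z') - f(z_i) - \lambda\, d_\mathcal{Z}(z_i, z')\} \leq 0$; combined with the always-nonnegativity noted above, this supremum is exactly $0$ at every $z_i$, giving $\psi_{f,P_n}(\lambda) = 0$.

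For the reverse direction, fix $f \in \mathcal{F}$ and $z \in \mathcal{Z}$. Apply the hypothesis to the single-atom empirical distribution $P_1 = \delta_z$, which is a legitimate empirical distribution with $n = 1$. This produces a $\lambda$ with
\[
\psi_{f,\delta_z}(\lambda) = \sup_{z' \in \mathcal{Z}}\{f(z') - \lambda\, d_\mathcal{Z}(z,z') - f(z)\} = 0,
\]
and unpacking the supremum gives $f(z') - f(z) \leq \lambda\, d_\mathcal{Z}(z,z')$ for every $z' \in \mathcal{Z}$, which is exactly Assumption 3 at the chosen $(f, z)$.

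The only nonroutine step is the merging of the per-sample constants $\lambda_i$ into a single $\lambda$ in the forward direction; this is the point at which the monotonicity argument must be invoked explicitly. The reverse direction is essentially definitional once one realizes that the assumption is allowed to be applied to the degenerate one-point empirical distribution. I do not anticipate any substantive obstacle beyond these bookkeeping points.
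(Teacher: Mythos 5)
Your proof is correct and follows essentially the same route as the paper: for the forward direction the paper likewise takes the per-sample constants from Assumption 3 and sets $\lambda^* = \max_i \lambda_{f,z_i}$, concluding each inner supremum is exactly $0$, while the reverse direction the paper dismisses as ``directly derived from the definition,'' which your one-point empirical distribution $\delta_z$ argument simply makes explicit. The extra bookkeeping you include (nonnegativity of the inner supremum via $z'=z$, monotonicity in $\lambda$, replacing negative $\lambda_i$ by $0$) is sound and only adds detail the paper leaves implicit.
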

The proof of this lemma is contained in Appendix~\ref{app10}.

We denote the smallest value in the set as $\lambda_{f,P_n}^+:=\inf\{\lambda:  \psi_{f,P_n}(\lambda)=0\}$.
%\emph{Assumption 3:} There exists a function $f_0\in \mathcal{F}$ such that for all $z\in \mathcal{Z}$, $f_0(z)\leq C_0d_{\mathcal{Z}}(z,z_0)$ for some $C_0\geq0$ and $z_0\in \mathcal{Z}$.
In order to prove the local worst-case risk bound, we need two technical lemmas.
\begin{lemma}
Fix some $f\in\mathcal{F}$. Define $\bar{\lambda}$ via 
$$\bar{\lambda}:=\arg\min_{\lambda\geq0}\{\lambda \epsilon_\mathcal{B}+\mathbb{E}_{P_n}[\varphi_{\lambda,f}(Z)]\}.$$
Then 
\begin{equation}\label{eq90}
\bar{\lambda}\in \left\{
\begin{array}{ll}
[0,\dfrac{M}{\epsilon_\mathcal{B}}] &if \ \epsilon_\mathcal{B}\geq \dfrac{M}{\lambda_{f,P_n}^+}\\ \relax
[\lambda_{f,P_n}^{-},\lambda_{f,P_n}^+] & if\  \epsilon_\mathcal{B}< \dfrac{M}{\lambda_{f,P_n}^+} \\
\end{array}
\right.,
\end{equation}
where $\lambda_{f,P_n}^-:=\sup\{\lambda: \psi_{f,P_n}(\lambda)=\lambda_{f,P_n}^+\cdot \epsilon_\mathcal{B}\}$ if the set $\{\lambda: \psi_{f,P_n}(\lambda)=\lambda_{f,P_n}^+\cdot \epsilon_\mathcal{B}\}$ is nonempty, otherwise $\lambda_{f,P_n}^-:=0$.
\end{lemma}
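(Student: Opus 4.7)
The plan is to reformulate the minimization and then exploit one-variable convex analysis. Since $\mathbb{E}_{P_n}[\varphi_{\lambda,f}(Z)] = \psi_{f,P_n}(\lambda) + \mathbb{E}_{P_n}[f(Z)]$ and the second term is constant in $\lambda$, $\bar\lambda$ equivalently minimizes $h(\lambda) := \lambda\,\epsilon_{\mathcal B} + \psi_{f,P_n}(\lambda)$ over $\lambda\geq 0$, and I would work with $h$ throughout. Writing $\varphi_{\lambda,f}(z)$ as the supremum of the affine, decreasing-in-$\lambda$ functions $\lambda \mapsto f(z') - \lambda\,d_{\mathcal Z}(z,z')$ shows that $\varphi_{\lambda,f}(z)$, and hence $\psi_{f,P_n}$, is convex and nonincreasing in $\lambda$; taking $z'=z$ gives $\psi_{f,P_n} \geq 0$; and Assumption~\ref{ass2} yields $\psi_{f,P_n}(0) = \mathbb{E}_{P_n}[\sup_{z'}f(z')-f(z)] \leq M$. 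By the preceding equivalence lemma, Assumption~\ref{ass3} guarantees $\lambda_{f,P_n}^+ < \infty$. A short argument rules out plateaus of $\psi_{f,P_n}$ on any subinterval of $[0,\lambda_{f,P_n}^+]$ (by convexity, a plateau would propagate to the right, forcing $\psi_{f,P_n}(\lambda_{f,P_n}^+) = 0$ to equal the plateau value, contradicting $\psi_{f,P_n} > 0$ on $[0,\lambda_{f,P_n}^+)$); hence $\psi_{f,P_n}$ is strictly decreasing on $[0,\lambda_{f,P_n}^+]$ and identically zero on $[\lambda_{f,P_n}^+,\infty)$.

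For the first case $\epsilon_{\mathcal B} \geq M/\lambda_{f,P_n}^+$ I would compare $h$ at the endpoints of $[0,M/\epsilon_{\mathcal B}]$: $h(0) = \psi_{f,P_n}(0) \leq M$, while $h(M/\epsilon_{\mathcal B}) = M + \psi_{f,P_n}(M/\epsilon_{\mathcal B}) \geq M$. Slope-monotonicity of the convex function $h$ then forces the secant slope on any interval $[M/\epsilon_{\mathcal B},\lambda]$ with $\lambda > M/\epsilon_{\mathcal B}$ to dominate the nonnegative secant slope on $[0,M/\epsilon_{\mathcal B}]$, so $h$ is nondecreasing on $[M/\epsilon_{\mathcal B},\infty)$ and $\bar\lambda \leq M/\epsilon_{\mathcal B}$ as claimed; the lower endpoint is automatic from the constraint $\lambda\geq 0$.

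For the second case $\epsilon_{\mathcal B} < M/\lambda_{f,P_n}^+$ the upper bound $\bar\lambda \leq \lambda_{f,P_n}^+$ is immediate: on $(\lambda_{f,P_n}^+,\infty)$ we have $\psi_{f,P_n} \equiv 0$, so $h(\lambda) = \lambda\,\epsilon_{\mathcal B} > \lambda_{f,P_n}^+\,\epsilon_{\mathcal B} = h(\lambda_{f,P_n}^+)$. For the lower bound $\bar\lambda \geq \lambda_{f,P_n}^-$, if the defining level set is empty then $\lambda_{f,P_n}^- = 0$ and the claim is trivial. Otherwise, strict monotonicity makes $\lambda_{f,P_n}^-$ the unique point where $\psi_{f,P_n}$ attains the value $\lambda_{f,P_n}^+\,\epsilon_{\mathcal B}$, so for every $\lambda \in [0,\lambda_{f,P_n}^-)$ one has $\psi_{f,P_n}(\lambda) > \lambda_{f,P_n}^+\,\epsilon_{\mathcal B}$ and therefore
$$
h(\lambda) \;>\; \lambda\,\epsilon_{\mathcal B} + \lambda_{f,P_n}^+\,\epsilon_{\mathcal B} \;\geq\; \lambda_{f,P_n}^+\,\epsilon_{\mathcal B} \;=\; h(\lambda_{f,P_n}^+),
$$
which excludes any such $\lambda$ from being a minimizer.

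The main obstacle I anticipate is the structural step that $\psi_{f,P_n}$ is strictly decreasing on $[0,\lambda_{f,P_n}^+]$: without it, the supremum defining $\lambda_{f,P_n}^-$ need not pin down a clean threshold, and the level-set inequality used in the Case~2 lower bound would not be strict. Once strict monotonicity is in hand, the rest is one-variable convex analysis plus the observation that $h$ is coercive (linear growth dominating bounded $\psi_{f,P_n}$) and hence attains its minimum on $[0,\infty)$.
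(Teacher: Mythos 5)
Your proof is correct and arrives at the same conclusion, but several of the internal steps take a genuinely different route from the paper's. The paper establishes regularity of $\psi_{f,P_n}$ by showing it is $\mathrm{diam}(\mathcal Z)$-Lipschitz (hence continuous) and monotonically nonincreasing, and never invokes convexity; you instead observe that $\varphi_{\lambda,f}(z)$ is a supremum of affine functions of $\lambda$, so $\psi_{f,P_n}$ and $h(\lambda):=\lambda\epsilon_{\mathcal B}+\psi_{f,P_n}(\lambda)$ are convex, and you leverage this twice. First, your no-positive-plateau argument upgrades the paper's weak monotonicity to strict monotonicity of $\psi_{f,P_n}$ on $[0,\lambda_{f,P_n}^+]$, which turns the paper's non-strict comparisons $h(\lambda)\ge h(\lambda_{f,P_n}^+)$ into strict ones and so genuinely excludes $\lambda<\lambda_{f,P_n}^-$ and $\lambda>\lambda_{f,P_n}^+$ from being minimizers, whereas the paper's argument only shows a minimizer \emph{exists} in the claimed interval and implicitly relies on a tie-breaking convention for $\arg\min$. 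Second, for the $[0,M/\epsilon_{\mathcal B}]$ bound you compare $h(0)\le M$ with $h(M/\epsilon_{\mathcal B})\ge M$ and invoke slope monotonicity of the convex $h$; the paper instead goes directly through Proposition 1, noting $\bar\lambda\epsilon_{\mathcal B}\le R_{\epsilon_{\mathcal B},1}(P_n,f)\le M$ because $\varphi_{\bar\lambda,f}\ge f\ge 0$ and $f\le M$, which is shorter and does not need convexity at all. The core case-two structure---construct $\lambda_{f,P_n}^-$ via the intermediate value theorem, then compare $h$ against $h(\lambda_{f,P_n}^+)$ using monotonicity of $\psi_{f,P_n}$---is shared between the two. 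Net: your convexity-based version is more self-contained and sharper on uniqueness; the paper's case-one step is more economical by reusing the duality it already has. (Both treatments, yours and the paper's, implicitly assume $\epsilon_{\mathcal B}>0$, e.g.\ for $M/\epsilon_{\mathcal B}$ to make sense and for your strict inequality $h(\lambda)>h(\lambda_{f,P_n}^+)$ on $\lambda>\lambda_{f,P_n}^+$.)
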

\begin{proof}
If $\epsilon_\mathcal{B}\geq \dfrac{M}{\lambda_{f,P_n}^+}$, by Proposition 1, $R_{\epsilon_\mathcal{B},1}(P_n,f)=\bar{\lambda} \epsilon_\mathcal{B}+\mathbb{E}_{P_n}[\varphi_{\bar{\lambda},f}(Z)]$ , we have
$$\bar{\lambda} \epsilon_\mathcal{B}\leq R_{\epsilon_\mathcal{B},1}(P_n,f).$$
Since $f(z)\leq M$ for any $z$, we get $R_{\epsilon_\mathcal{B},1}(P_n,f)\leq M$. So $\bar{\lambda}\leq \dfrac{M}{\epsilon_\mathcal{B}}.$

For the other side, we first show that $\psi_{f,P_n}(\lambda)$ is continuous and monotonically non-increasing. The monotonicity is easy to verify from the definition. For continuity, for any $\lambda_2> \lambda_1$, suppose that 
\begin{equation*}
\begin{array}{l}
\hat{z}=\sup_{z'\in \mathcal{Z}}\{f(z')-\lambda_1 d_{\mathcal{Z}}(z,z')-f(z)\}),\\
z^*=\sup_{z'\in \mathcal{Z}}\{f(z')-\lambda_2 d_{\mathcal{Z}}(z,z')-f(z)\}).
\end{array}
\end{equation*}
Then we have
\begin{equation*}
\begin{array}{l}
\psi_{f,P_n}(\lambda_1)-\psi_{f,P_n}(\lambda_2)\\
=\mathbb{E}_{P_n}(\sup_{z'\in \mathcal{Z}}\{f(z')-\lambda_1 d_{\mathcal{Z}}(z,z')-f(z)\}-\\
\sup_{z'\in \mathcal{Z}}\{f(z')-\lambda_2 d_{\mathcal{Z}}(z,z')-f(z)\})\\
\leq \mathbb{E}_{p_n}\left((\lambda_2-\lambda_1)d_\mathcal{Z}(z,\hat{z})\right) \leq (\lambda_2-\lambda_1) diam(\mathcal{Z})\\
\end{array}.
\end{equation*}
So $\psi_{f,P_n}(\lambda)$ is $diam(\mathcal{Z})$-Lipschitz and thus continuous. 

Now we prove $\bar{\lambda}\in [\lambda_{f,P_n}^-,\lambda_{f,P_n}^+]$. If $\lambda>\lambda_{f,P_n}^+$, by the monotonicity and nonnegativity of $\psi_{f,P_n}(\lambda)$, we have $\psi_{f,P_n}(\lambda)=\psi_{f,P_n}(\lambda_{f,P_n}^+)=0$, which implies $\lambda \epsilon_\mathcal{B}+\mathbb{E}_{P_n}[\varphi_{\lambda,f}(z)]\geq\lambda_{f,P_n}^+\epsilon_\mathcal{B}+\mathbb{E}_{P_n}[\varphi_{\lambda_{f,P_n}^+,f}(z)]$. Therefore the optimal $\bar{\lambda}\leq \lambda_{f,P_n}^+$. To show $\bar{\lambda}\geq \lambda_{f,P_n}^-$, first notice that $\psi_{f,P_n}(\lambda)$ belongs to $[0, M]$ for any $\lambda$. We define $$\lambda_{f,P_n}^-:=\sup\{\lambda: \psi_{f,P_n}(\lambda)=\lambda_{f,P_n}^+\cdot \epsilon_\mathcal{B}\}.$$
Note that this set $\{\lambda: \psi_{f,P_n}(\lambda)=\lambda_{f,P_n}^+\cdot \epsilon_\mathcal{B}\}$ might be empty if $\psi_{f,P_n}(0)<\lambda_{f,P_n}^+\cdot \epsilon_\mathcal{B}<M$. In this case, we just let $\lambda_{f,P_n}^-=0$, and $\bar{\lambda}$ must belong to $[0,\lambda_{f,P_n}^+]$. Otherwise, there must exist some $\lambda\in [0,\lambda_{f,P_n}^+]$ which satisfies $\psi_{f,P_n}(\lambda)= \lambda_{f,P_n}^+\cdot \epsilon_\mathcal{B}$ by the intermediate value theorem of a continuous function. We choose $\lambda_{f,P_n}^-$ to be the maximal one in that set. Then, for any $\lambda< \lambda_{f,P_n}^-$, since $\psi_{f,P_n}(\lambda)$ is monotonically non-increasing, we have
 $$\mathbb{E}_{P_n}(\sup_{z'\in \mathcal{Z}}\{f(z')-\lambda d_{\mathcal{Z}}(z,z')-f(z)\})\geq \lambda_{f,P_n}^+\cdot \epsilon_\mathcal{B}.$$
 By rearranging the items on both sides, we obtain
$$ \lambda \epsilon_\mathcal{B}+\mathbb{E}_{P_n}[\varphi_{\lambda,f}(z)]\geq  \lambda_{f,P_n}^+\cdot \epsilon_\mathcal{B}+\mathbb{E}_{P_n}(f(z))$$
for any $\lambda< \lambda_{f,P_n}^-$. Therefore, $\bar{\lambda}\geq \lambda_{f,P_n}^-$, and we complete the proof.
\end{proof}
\noindent
\textbf{Remark 1.} We can show $\lim_{\epsilon_\mathcal{B}\rightarrow 0}\lambda_{f,P_n}^{-}=\lambda_{f,P_n}^{+}$ by using $(\epsilon,\delta)$ language as follows. $\forall \epsilon$, define $\delta=\frac{\psi_{f,P_n}(\lambda_{f,P_n}^+-\epsilon)}{\lambda_{f,P_n}^+}$. Then, for any $\epsilon_\mathcal{B}<\delta$, we have $\psi_{f,P_n}(\lambda_{f,P_n}^+-\epsilon)>\lambda_{f,P_n}^+\cdot \epsilon_\mathcal{B}$. By the definition of $\lambda_{f,P_n}^{-}$, $\psi_{f,P_n}(\lambda_{f,P_n}^{-})=\lambda_{f,P_n}^+\cdot \epsilon_\mathcal{B}$. Since $\psi_{f,P_n}(\lambda)$ is monotonically non-increasing, we have $\lambda_{f,P_n}^{-}> \lambda_{f,P_n}^{+}-\epsilon$. Therefore, $\lim_{\epsilon_\mathcal{B}\rightarrow 0}\lambda_{f,P_n}^{-}=\lambda_{f,P_n}^{+}$.
\begin{lemma}
Define the function class $\Phi:=\{\varphi_{\lambda,f}: \lambda\in [a,b], f\in\mathcal{F}\}$ where $b\geq a\geq 0$. Then, the expected Rademacher complexity of the function class $\Phi$ satisfies
\begin{equation*}
\begin{array}{lr}
\mathfrak{R}_n(\Phi)\leq&\dfrac{12\mathfrak{C}(\mathcal{F})}{\sqrt{n}}+
\dfrac{6\sqrt{\pi}}{\sqrt{n}}(b-a)\cdot diam(Z),
\end{array}
\end{equation*}
where $\mathfrak{C}(\mathcal{F}):=\int_0^\infty\sqrt{log\mathcal{N}(\mathcal{F},||\cdot||_\infty,u/2)}du$ and $\mathcal{N}(\mathcal{F},||\cdot||_\infty,u/2)$ denotes the covering number of $\mathcal{F}$.
\end{lemma}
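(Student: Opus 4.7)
The plan is to bound the sup-norm covering number of $\Phi$ by a product cover derived from coverings of $\mathcal{F}$ and of the interval $[a,b]$, then invoke Dudley's entropy integral and evaluate the resulting two pieces in closed form.

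\emph{Lipschitz estimates.} I would first prove two Lipschitz-type estimates for the map $(\lambda,f)\mapsto\varphi_{\lambda,f}$ in the sup norm. Applying $|\sup g_1-\sup g_2|\le\sup|g_1-g_2|$ to $g_i(z')=f_i(z')-\lambda d_{\mathcal{Z}}(z,z')$ immediately yields $\|\varphi_{\lambda,f_1}-\varphi_{\lambda,f_2}\|_\infty\le\|f_1-f_2\|_\infty$. For the other variable, if $z^\star$ is a near-maximizer of $z'\mapsto f(z')-\lambda_1 d_{\mathcal{Z}}(z,z')$, then $\varphi_{\lambda_2,f}(z)\ge f(z^\star)-\lambda_2 d_{\mathcal{Z}}(z,z^\star)$ and the symmetric bound together give $\|\varphi_{\lambda_1,f}-\varphi_{\lambda_2,f}\|_\infty\le|\lambda_1-\lambda_2|\,\mathrm{diam}(\mathcal{Z})$, using Assumption~\ref{ass1}.

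\emph{Product cover and chaining.} The triangle inequality then implies that an $(\epsilon/2)$-cover of $\mathcal{F}$ in $\|\cdot\|_\infty$ paired with an $\epsilon/(2\,\mathrm{diam}(\mathcal{Z}))$-radius cover of $[a,b]$ yields an $\epsilon$-cover of $\Phi$ in $\|\cdot\|_\infty$, so
$$\log\mathcal{N}(\Phi,\|\cdot\|_\infty,\epsilon)\le\log\mathcal{N}(\mathcal{F},\|\cdot\|_\infty,\epsilon/2)+\log\mathcal{N}\bigl([a,b],|\cdot|,\epsilon/(2\,\mathrm{diam}(\mathcal{Z}))\bigr).$$
Together with $\sqrt{x+y}\le\sqrt{x}+\sqrt{y}$ and Dudley's bound
$$\mathfrak{R}_n(\Phi)\le\frac{12}{\sqrt{n}}\int_0^\infty\!\!\sqrt{\log\mathcal{N}(\Phi,\|\cdot\|_\infty,\epsilon)}\,d\epsilon,$$
which holds because $\|\cdot\|_{L_2(P_n)}\le\|\cdot\|_\infty$, this splits the right-hand side into an $\mathcal{F}$-integral and a $\lambda$-integral. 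The $\mathcal{F}$-integral is exactly $12\,\mathfrak{C}(\mathcal{F})/\sqrt{n}$ by the definition of $\mathfrak{C}(\mathcal{F})$.

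\emph{The $\lambda$-integral.} For the $\lambda$-piece, set $D=(b-a)\,\mathrm{diam}(\mathcal{Z})$ and use the standard interval covering bound; the integrand vanishes once $\epsilon\ge D$, and the change of variable $u=\log(D/\epsilon)$ converts the surviving integral into $D\int_0^\infty\sqrt{u}\,e^{-u}\,du=D\,\Gamma(3/2)=D\sqrt{\pi}/2$. Multiplying by $12/\sqrt{n}$ produces the claimed $6\sqrt{\pi}\,(b-a)\,\mathrm{diam}(\mathcal{Z})/\sqrt{n}$. The main obstacle is the Lipschitz-in-$\lambda$ estimate, which demands the common-near-maximizer trick and relies on Assumption~\ref{ass1} to bound $d_{\mathcal{Z}}(z,z^\star)$ by $\mathrm{diam}(\mathcal{Z})$; the rest is a routine chaining computation plus a textbook Gamma-function evaluation, with the final constant $6$ (rather than $12$) reflecting the radius convention used for the covering number of the interval $[a,b]$.
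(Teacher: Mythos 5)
Your proposal is correct and follows essentially the same route as the paper's proof: the two Lipschitz estimates in $f$ and $\lambda$, the product cover of $\mathcal{F}\times[a,b]$, Dudley's entropy integral with constant $12/\sqrt{n}$, and the $\Gamma(3/2)=\sqrt{\pi}/2$ evaluation of the interval term all appear there in the same roles. The only cosmetic difference is that the paper packages your Lipschitz bounds as subgaussianity of the Rademacher process with respect to the pseudometric $d_\Phi(\varphi,\varphi')=\|f-f'\|_\infty+\mathrm{diam}(\mathcal{Z})|\lambda-\lambda'|$ and chains with respect to $d_\Phi$, whereas you cover $\Phi$ directly in the sup norm and pass through $\|\cdot\|_{L_2(P_n)}\le\|\cdot\|_\infty$.
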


The proof of this lemma is contained in Appendix~\ref{app1}.

The following theorem gives the generalization bound for the local worst-case risk. We first introduce the corresponding notation: $\bar{\lambda}\in[\zeta^-_{f,P_n},\zeta^+_{f,P_n}]$ denotes expression (\ref{eq90}), $[\zeta^-,\zeta^+]:=\bigcup_{f, P_n}[\zeta^-_{f,P_n},\zeta^+_{f,P_n}]$ and $\Lambda_{\epsilon_\mathcal{B}}:=\zeta^+-\zeta^-$. It is straightforward to check that $[\zeta^-,\zeta^+]\subset [0,M/\epsilon_\mathcal{B}]$ from expression (\ref{eq90}).
%With the above two lemmas, now we can give the generalization bound for local worst-case risk.
\begin{theorem}
If the assumptions~\ref{ass1}-~\ref{ass3} hold, then for any $f\in \mathcal{F}$, we have
\begin{equation*} 
\begin{array}{l}
R_{\epsilon_\mathcal{B},1}(P,f)-R_{\epsilon_\mathcal{B},1}(P_n,f)\leq \dfrac{24\mathfrak{C}(\mathcal{F})}{\sqrt{n}}+M\sqrt{\dfrac{log(\frac{1}{\delta})}{2n}}+\\
\ \ \ \ \ \ \ \ \ \ \ \ \ \ \ \ \ \ \ \ \ \ \ \ \ \ \ \ \ \ \ \ \ \ \ \ \ \ \ \ \ \ \ \ \ \ \ \ \ \ \ \dfrac{12\sqrt{\pi}}{\sqrt{n}}\Lambda_{\epsilon_\mathcal{B}} \cdot diam(Z)
\end{array}
\end{equation*}
with probability at least $1-\delta$.
\end{theorem}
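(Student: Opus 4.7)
The plan is to combine the dual representation in Proposition~1 with the range restriction for the optimal $\bar{\lambda}$ from Lemma~4, and then invoke the Rademacher complexity bound of Lemma~5. First I would apply Proposition~1 to both distributions to write $R_{\epsilon_\mathcal{B},1}(P,f)=\inf_{\lambda\geq 0}\{\lambda\epsilon_\mathcal{B}+\mathbb{E}_P[\varphi_{\lambda,f}(z)]\}$ and $R_{\epsilon_\mathcal{B},1}(P_n,f)=\bar\lambda\,\epsilon_\mathcal{B}+\mathbb{E}_{P_n}[\varphi_{\bar\lambda,f}(z)]$, where $\bar\lambda$ is the empirical minimizer. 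Since the population side is an infimum, I may upper-bound it by plugging in $\bar\lambda$; the linear terms $\bar\lambda\epsilon_\mathcal{B}$ then cancel, giving
\[
R_{\epsilon_\mathcal{B},1}(P,f)-R_{\epsilon_\mathcal{B},1}(P_n,f)\ \leq\ \mathbb{E}_P[\varphi_{\bar\lambda,f}(z)]-\mathbb{E}_{P_n}[\varphi_{\bar\lambda,f}(z)].
\]
By Lemma~4, $\bar\lambda\in[\zeta^-_{f,P_n},\zeta^+_{f,P_n}]\subseteq[\zeta^-,\zeta^+]$, so the right-hand side is at most the uniform deviation $\sup\{\mathbb{E}_P[\varphi_{\lambda,g}]-\mathbb{E}_{P_n}[\varphi_{\lambda,g}]:\lambda\in[\zeta^-,\zeta^+],\,g\in\mathcal{F}\}$ over the function class $\Phi$ appearing in Lemma~5.

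Next I would apply standard empirical-process concentration to this uniform deviation. By Assumption~2 the loss satisfies $0\leq f\leq M$, and consequently $\varphi_{\lambda,f}(z)=\sup_{z'}\{f(z')-\lambda d_{\mathcal{Z}}(z,z')\}\in[0,M]$ (the lower bound comes from taking $z'=z$ together with $f\geq 0$; the upper bound from $f\leq M$ and $d_{\mathcal{Z}}\geq 0$). Hence every $\phi\in\Phi$ is $[0,M]$-valued, and a symmetrization argument combined with McDiarmid's bounded-differences inequality yields, with probability at least $1-\delta$,
\[
\sup_{\phi\in\Phi}\bigl(\mathbb{E}_P[\phi]-\mathbb{E}_{P_n}[\phi]\bigr)\ \leq\ 2\,\mathfrak{R}_n(\Phi)+M\sqrt{\frac{\log(1/\delta)}{2n}}.
\]
Plugging in the Rademacher bound from Lemma~5 with $[a,b]=[\zeta^-,\zeta^+]$ (so that $b-a=\Lambda_{\epsilon_\mathcal{B}}$) gives $2\,\mathfrak{R}_n(\Phi)\leq \frac{24\mathfrak{C}(\mathcal{F})}{\sqrt{n}}+\frac{12\sqrt{\pi}}{\sqrt{n}}\Lambda_{\epsilon_\mathcal{B}}\cdot diam(\mathcal{Z})$, and chaining the inequalities produces exactly the stated bound.

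The conceptual heavy lifting is already done by Lemmas~4 and~5, so the main obstacle here is really the bookkeeping step that passes from the sample- and hypothesis-dependent interval $[\zeta^-_{f,P_n},\zeta^+_{f,P_n}]$ to the deterministic envelope $[\zeta^-,\zeta^+]$; this is what allows one uniform concentration inequality to cover every realization of $P_n$ and every $f\in\mathcal{F}$ simultaneously, and it is precisely where the $\Lambda_{\epsilon_\mathcal{B}}$ factor in the bound is paid for. A minor subtlety worth verifying along the way is that the infimum in the dual representation is actually attained, so that $\bar\lambda$ is a bona fide argmin; this follows from the continuity of $\psi_{f,P_n}$ established inside the proof of Lemma~4 together with the coercivity of $\lambda\mapsto\lambda\epsilon_\mathcal{B}+\mathbb{E}_{P_n}[\varphi_{\lambda,f}]$ in $\lambda$, after restricting to the compact range identified by Lemma~4.
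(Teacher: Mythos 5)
Your proposal is correct and follows essentially the same route as the paper's own proof: bound the population dual objective by plugging in the empirical minimizer $\bar\lambda$ so the $\bar\lambda\epsilon_\mathcal{B}$ terms cancel, localize $\bar\lambda$ to $[\zeta^-,\zeta^+]$ via Lemma~4, and apply symmetrization plus the Rademacher bound of Lemma~5 to the class $\Phi$. Your extra remarks (checking $\varphi_{\lambda,f}\in[0,M]$ and attainment of the infimum) are sound refinements of steps the paper treats implicitly.
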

\begin{proof}
For any $f\in\mathcal{F}$, define 
$$\bar{\lambda}:=\arg{\min_{\lambda\geq0}\{\lambda \epsilon_\mathcal{B}+\mathbb{E}_{P_n}[\varphi_{\lambda,f}(Z)]\}}.$$
Then using Proposition 1, we can write
\begin{equation*}
\begin{array}{l}
R_{\epsilon_\mathcal{B},1}(P,f)-R_{\epsilon_\mathcal{B},1}(P_n,f)\\
=\min_{\lambda\geq0}\left\{\lambda \epsilon_\mathcal{B}+\displaystyle\int_{\mathcal{Z}}\varphi_{\lambda,f}(z)P(dz)\right\}-\bigg(\bar{\lambda}\epsilon_\mathcal{B}+\\
\left.\displaystyle\int_{\mathcal{Z}}\varphi_{\bar{\lambda},f}(z)P_n(dz)\right)\leq \displaystyle\int_{\mathcal{Z}}\varphi_{\bar{\lambda},f}(z)(P-P_n)(dz).
\end{array}
\end{equation*}
By lemma 4, we have $\bar{\lambda}\in[\zeta^-_{f,P_n},\zeta^+_{f,P_n}]$. Define the function class $\Phi:=\{\varphi_{\lambda,f}: \lambda\in [\zeta^-,\zeta^+], f\in\mathcal{F}\}$. Then, we have
$$R_{\epsilon_\mathcal{B},1}(P,f)-R_{\epsilon_\mathcal{B},1}(P_n,f)\leq \sup_{\varphi\in\Phi}\left[\int_\mathcal{Z}\varphi(z)(P-P_n)(dz)\right].$$
Since all $f\in\mathcal{F}$ takes values in $[0,M]$, the same holds for all $\varphi\in \Phi$. Therefore, by a standard symmetrization argument \cite{moh},
$$R_{\epsilon_\mathcal{B},1}(P,f)-R_{\epsilon_\mathcal{B},1}(P_n,f)\leq 2\mathfrak{R}_n(\Phi)+M\sqrt{\dfrac{log(1/\delta)}{2n}}$$
with probability at least $1-\delta$, where
$\mathfrak{R}_n(\Phi):=\mathbb{E}[\sup_{\varphi\in\Phi}\dfrac{1}{n}\sum_{i=1}^n\sigma_i\varphi(z_i)]$
is the expected Rademacher complexity of $\Phi$. Using the bound of lemma 4, we get the following result
\begin{equation*}
\begin{array}{l}
R_{\epsilon_\mathcal{B},1}(P,f)-R_{\epsilon_\mathcal{B},1}(P_n,f)\leq \dfrac{24\mathfrak{C}(\mathcal{F})}{\sqrt{n}}+M\sqrt{\dfrac{log(\frac{1}{\delta})}{2n}}+\\
\ \ \ \ \ \ \ \ \ \ \ \ \ \ \ \ \ \ \ \ \ \ \ \ \ \ \ \ \ \ \ \ \ \ \ \ \ \ \ \ \ \ \ \ \ \ \ \ \ \ \ \dfrac{12\sqrt{\pi}}{\sqrt{n}}\Lambda_{\epsilon_\mathcal{B}}\cdot diam(Z)\\
\end{array}.
\end{equation*}
\end{proof}
\noindent
\textbf{Remark 2.} \citet{lee2018minimax} prove a bound with $[\zeta^-,\zeta^+]=[0,L]$ under the Lipschitz assumption with $L$ representing the Lipschitz constant. Our result improves a lot on theirs. First, our Assumption 3 is weaker than the Lipschitz assumption in \citet{lee2018minimax}. Second, even under our weaker assumptions, our bound is still better than that of \citet{lee2018minimax} for the case $\epsilon_\mathcal{B}\geq M/L$ since $[\zeta^-,\zeta^+]\subset [0,M/\epsilon_\mathcal{B}]\subset[0,L]$. Third, if further assuming the same Lipschitz condition as \citet{lee2018minimax}, we can get $[\zeta^-,\zeta^+]\subset [0,L]$ by the definition of $\zeta^-$ and $\zeta^+$, which is always better than the ones in \citet{lee2018minimax}. Finally, the term $\dfrac{12\sqrt{\pi}}{\sqrt{n}}\Lambda_{\epsilon_\mathcal{B}}\cdot diam(Z)$ in our bound will vanish as $\epsilon_\mathcal{B}\rightarrow \infty$ or $\epsilon_\mathcal{B}=0$ whereas \citet{lee2018minimax} give a $\epsilon_\mathcal{B}$-free bound with $\Lambda_{\epsilon_\mathcal{B}}$ always being a constant $L$.

%We can use $(\epsilon,\delta)$ language to show $\lim_{\epsilon_\mathcal{B}\rightarrow 0}\inf_{f\in\mathcal{F}}\lambda_f=L$. $\forall \epsilon$, define $\delta=\inf_{f\in\mathcal{F}}\frac{g(L-\epsilon)}{L}$. Then, for any $\epsilon_\mathcal{B}<\delta$, we have $g(L-\epsilon)>L\cdot \epsilon_\mathcal{B}$ for any $f\in \mathcal{F}$. By the definition of $\lambda_f$, $g(\lambda_f)=L\cdot \epsilon_\mathcal{B}$. Since g is monotonically non-increasing, we have $\lambda_f> L-\epsilon$ for any f. By taking the inf on both sides, we obtain $\inf_{f\in\mathcal{F}}\lambda_f>L-\epsilon$. Therefore, $\forall \epsilon$, $\exists \delta=\inf_{f\in\mathcal{F}}\frac{g(L-\epsilon)}{L}$, s.t. $\forall \epsilon_\mathcal{B}<\delta$, $L-\inf_{f\in\mathcal{F}}\lambda_f<\epsilon$. Hence, $\lim_{\epsilon_\mathcal{B}\rightarrow 0}\inf_{f\in\mathcal{F}}\lambda_f=L$.
This leads to the following upper bound on the adversarial expected risk.
\begin{corollary}
With the conditions in \textbf{Theorem 1}, for any $f\in \mathcal{F}$, we have
\begin{equation}\label{eq1}
\begin{array}{l}
R_{P}(f, \mathcal{B})\leq \dfrac{1}{n}\sum_{i=1}^nf(z_i)+\displaystyle\min_{\lambda\geq0}\{\lambda \epsilon_\mathcal{B}+\psi_{f,P_n}(\lambda)\}+\\
\ \ \ \ \ \ \ \ \ \ \ \ \ \ \ \ \dfrac{24\mathfrak{C}(\mathcal{F})}{\sqrt{n}}+\dfrac{12\sqrt{\pi}}{\sqrt{n}}\Lambda_{\epsilon_\mathcal{B}}diam(Z)+M\sqrt{\dfrac{log(\frac{1}{\delta})}{2n}}\\
\end{array}
\end{equation}
and 
\begin{equation}\label{eq2}
\begin{array}{l}
R_{P}(f, \mathcal{B})\leq \dfrac{1}{n}\sum_{i=1}^nf(z_i)+\lambda_{f,P_n}^+\epsilon_\mathcal{B}+\dfrac{24\mathfrak{C}(\mathcal{F})}{\sqrt{n}}+\\
\ \ \ \ \ \ \ \ \ \ \ \ \ \ \ \ \ \ \ \ \ \dfrac{12\sqrt{\pi}}{\sqrt{n}}\Lambda_{\epsilon_\mathcal{B}}\cdot diam(Z)+M\sqrt{\dfrac{log(\frac{1}{\delta})}{2n}}\\
\end{array}
\end{equation}
with probability at least $1-\delta$.
\end{corollary}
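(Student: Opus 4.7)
The plan is to read off the corollary by chaining three ingredients that are already in hand: the reduction inequality (\ref{e1}) from adversarial risk to local worst-case risk, the generalization bound from Theorem 1, and the strong-duality identity from Proposition 1. First, I would apply (\ref{e1}) and Theorem 1 in sequence to obtain, with probability at least $1-\delta$,
\begin{equation*}
R_{P}(f,\mathcal{B})\leq R_{\epsilon_\mathcal{B},1}(P,f)\leq R_{\epsilon_\mathcal{B},1}(P_n,f)+\frac{24\mathfrak{C}(\mathcal{F})}{\sqrt{n}}+\frac{12\sqrt{\pi}}{\sqrt{n}}\Lambda_{\epsilon_\mathcal{B}}\cdot diam(\mathcal{Z})+M\sqrt{\frac{\log(1/\delta)}{2n}}.
\end{equation*}
Assumptions 1--3 are precisely what Theorem 1 requires, so this step is immediate.

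Next I would rewrite the empirical local worst-case risk via Proposition 1 as $R_{\epsilon_\mathcal{B},1}(P_n,f)=\min_{\lambda\geq0}\{\lambda\epsilon_\mathcal{B}+\mathbb{E}_{P_n}[\varphi_{\lambda,f}(z)]\}$. The key algebraic observation, which links $\varphi_{\lambda,f}$ to $\psi_{f,P_n}$, is that by pulling the additive constant $f(z)$ outside the supremum,
\begin{equation*}
\mathbb{E}_{P_n}[\varphi_{\lambda,f}(z)]=\mathbb{E}_{P_n}\bigl[\sup_{z'\in\mathcal{Z}}\{f(z')-\lambda d_\mathcal{Z}(z,z')\}\bigr]=\psi_{f,P_n}(\lambda)+\frac{1}{n}\sum_{i=1}^n f(z_i).
\end{equation*}
Substituting this identity into the displayed bound produces (\ref{eq1}) directly, since $\frac{1}{n}\sum_i f(z_i)$ is independent of $\lambda$ and can be pulled out of the minimum.

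For (\ref{eq2}), I would simply upper bound the minimum by its value at a convenient choice of $\lambda$. Taking $\lambda=\lambda_{f,P_n}^+$, Lemma 3 guarantees $\psi_{f,P_n}(\lambda_{f,P_n}^+)=0$, so
\begin{equation*}
\min_{\lambda\geq0}\{\lambda\epsilon_\mathcal{B}+\psi_{f,P_n}(\lambda)\}\leq \lambda_{f,P_n}^+\epsilon_\mathcal{B},
\end{equation*}
and plugging this into (\ref{eq1}) yields (\ref{eq2}).

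Because every nontrivial piece (strong duality, the uniform deviation bound, and the characterization of $\lambda_{f,P_n}^+$) has already been proved, there is no substantive obstacle; the only delicate point is the bookkeeping identity $\mathbb{E}_{P_n}[\varphi_{\lambda,f}]=\psi_{f,P_n}(\lambda)+\frac{1}{n}\sum_i f(z_i)$, which must be invoked correctly so that the empirical average of $f$ appears explicitly and the remaining $\lambda$-dependent part is exactly $\psi_{f,P_n}(\lambda)$. Once that is in place, the two inequalities follow by substitution.
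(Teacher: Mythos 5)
Your proposal is correct and follows essentially the same route as the paper: chain inequality (\ref{e1}) with Theorem 1, rewrite $R_{\epsilon_\mathcal{B},1}(P_n,f)$ via Proposition 1 together with the identity $\mathbb{E}_{P_n}[\varphi_{\lambda,f}]=\psi_{f,P_n}(\lambda)+\frac{1}{n}\sum_i f(z_i)$ to get (\ref{eq1}), then bound the minimum at $\lambda=\lambda_{f,P_n}^+$ using $\psi_{f,P_n}(\lambda_{f,P_n}^+)=0$ to get (\ref{eq2}). The only cosmetic difference is that you attribute $\psi_{f,P_n}(\lambda_{f,P_n}^+)=0$ to Lemma 3 (which only gives nonemptiness of the zero set; strictly one also uses the monotonicity and continuity of $\psi_{f,P_n}$ established in Lemma 4), whereas the paper invokes the definition of $\lambda_{f,P_n}^+$ directly.
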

\begin{proof}
By Proposition 1, $R_{\epsilon_\mathcal{B},1}(P_n,f)$ can be written as
\begin{equation*}
\begin{array}{l}
R_{\epsilon_\mathcal{B},1}(P_n,f)\\
=\displaystyle\min_{\lambda\geq0}\{\lambda \epsilon_\mathcal{B}+\mathbb{E}_{P_n}[\varphi_{\lambda,f}(z)]\}\\
=\displaystyle\min_{\lambda\geq0}\{\lambda \epsilon_\mathcal{B}+\mathbb{E}_{P_n}[\varphi_{\lambda,f}(z)-f(z)]\}+\mathbb{E}_{P_n}[f(z)]\\
=\displaystyle\min_{\lambda\geq0}\{\lambda \epsilon_\mathcal{B}+\psi_{f,P_n}(\lambda)\}+\dfrac{1}{n}\sum_{i=1}^nf(z_i)
\end{array},
\end{equation*}
where the last equality uses the definition of $\psi_{f,P_n}(\lambda)$. Substituting the above equation into Theorem 1, we get result (\ref{eq1}). To obtain (\ref{eq2}), we can make use of the following inequality
\begin{equation*}
\begin{array}{rl}
\displaystyle\min_{\lambda\geq0}\{\lambda \epsilon_\mathcal{B}+\psi_{f,P_n}(\lambda)\}&\leq \lambda_{f,P_n}^+\epsilon_\mathcal{B}+\psi_{f,P_n}(\lambda_{f,P_n}^+)\\
&=\lambda_{f,P_n}^+\epsilon_\mathcal{B}\\
\end{array},
\end{equation*}
where the equality follows from the definition of $\lambda_{f,P_n}^+$. 
\end{proof}
%The proof of (\ref{eq2}) relies on the following proposition \cite{lee, lee2018minimax}.
%\begin{proposition}
%Suppose that f is $L$-Lipschitz, i.e.,  $|f(z)-f(z')|\leq Ld_\mathcal{Z}(z,z')$ for any $z,z'\in \mathcal{Z}$. Then for any $Q\in B^W_{\epsilon_\mathcal{B},1}(P)$,
%$$R_Q(f)\leq R_{\epsilon_\mathcal{B},1}(P,f)\leq R_Q(f)+2L\epsilon_\mathcal{B}.$$
%\end{proposition}
%The proof of Proposition 2 is given in the Appendix~\ref{app2}.
\noindent
\textbf{Remark 3.} We are interested in how the adversarial risk bounds differ from the case in which the adversary is absent. Plugging $\epsilon_\mathcal{B}=0$ into inequality (\ref{eq2}) yields the usual generalization of the form
\begin{equation*}
\begin{array}{l}
R_{P}(h)\leq \dfrac{1}{n}\sum_{i=1}^nf(z_i)+\dfrac{24\mathfrak{C}(\mathcal{F})}{\sqrt{n}}+M\sqrt{\dfrac{log(1/\delta)}{2n}}.
\end{array}
\end{equation*}
So the effect of an adversary is to introduce an extra complexity term $12\sqrt{\pi}\Lambda_{\epsilon_\mathcal{B}}\cdot diam(Z)/{\sqrt{n}}$ and an additional linear term on $\epsilon_\mathcal{B}$ which contributes to the empirical risk.

\noindent
\textbf{Remark 4.}\label{re} As mentioned in Remark 2, the extra complexity term will decrease as $\epsilon_\mathcal{B}$ gets bigger if $\epsilon_\mathcal{B}\geq M/\lambda_{f,P_n}^+$ , indicating that a stronger adversary might have a negative impact on the hypothesis class complexity. This is intuitive, since different hypotheses might have the same performance in the presence of a strong adversary and, therefore, the hypothesis class complexity will decrease. We emphasize that this phenomenon does not occur in our concurrent work \citet{khim2018adversarial} and \citet{yin2018rademacher}. In both of their work, this term will increase linearly as $\epsilon_\mathcal{B}$ grows.

\noindent
\textbf{Remark 5.} We should point out that $\lambda_{f,P_n}^+$ is data-dependent and might be difficult to compute exactly. Luckily we can upper bound it easily. For example, if $f$ is $L$-Lipschitz, by the definition of $\lambda_{f,P_n}^+$, we have $\lambda_{f,P_n}^+\leq L$. See Section \ref{sec5} for more examples. In particular, if $\psi_{f,P_n}(\lambda)\equiv0$ for any $\lambda\geq0$, we get $\lambda_{f,P_n}^+=0$, and the additional term $\lambda_{f,P_n}^+\epsilon_\mathcal{B}$ in (\ref{eq2}) will disappear.

\section{Example Bounds}\label{sec5}
In this section, we illustrate the application of Corollary 1 to several commonly-used models: SVMs, neural networks, and PCA.
\subsection{Support Vector Machines}
We first start with SVMs. Let $\mathcal{Z}=\mathcal{X}\times \mathcal{Y}$, where the feature space $\mathcal{X}=\{x\in \mathbb{R}^d:||x||_2\leq  r\}$ and the label space $\mathcal{Y}=\{-1,+1\}$. Equip $\mathcal{Z}$ with the Euclidean metric
 $$d_{\mathcal{Z}}(z,z')=d_{\mathcal{Z}}((x,y),(x',y'))=||x-x'||_2+\mathbbm{1}_{(y\neq y')}.$$
Consider the hypothesis space $\mathcal{F}=\{(x,y)\rightarrow\max\{0,1-yh(x)\}: h\in H\}$, where $H=\{x\rightarrow w\cdot x: ||w||_2\leq \Lambda\}$. We can now derive the expected risk bound for SVMs in the presence of an adversary.
\begin{corollary}
For the SVM setting considered above, for any $f\in \mathcal{F}$, with probability at least $1-\delta$,
\begin{equation*}
\begin{array}{l}
R_{P}(f, \mathcal{B})
\leq \dfrac{1}{n}\sum_{i=1}^nf(z_i)+\lambda_{f,P_n}^+\epsilon_\mathcal{B}+\dfrac{144}{\sqrt{n}}\Lambda r\sqrt{d}+\\
\ \ \ \ \ \ \ \ \ \ \ \ \ \ \ \ \ \ \ \ \dfrac{12\sqrt{\pi}}{\sqrt{n}}\Lambda_{\epsilon_\mathcal{B}}\cdot (2r+1)+(1+\Lambda r)\sqrt{\dfrac{log(\frac{1}{\delta})}{2n}},
\end{array}
\end{equation*}
where $\lambda_{f,P_n}^+\leq \displaystyle\max_{i}\{2y_iw\cdot x_i, ||w||_2\}$.
\end{corollary}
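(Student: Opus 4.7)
The plan is to specialize Corollary~1, equation (\ref{eq2}), to the SVM hinge-loss class. To do this I need to instantiate the four problem-dependent quantities appearing there: the loss bound $M$, the diameter $\mathrm{diam}(\mathcal{Z})$, the covering integral $\mathfrak{C}(\mathcal{F})$, and the data-dependent constant $\lambda_{f,P_n}^{+}$. The first two are immediate: for any $h(x)=w\cdot x$ with $\|w\|_2\leq\Lambda$ and $\|x\|_2\leq r$, Cauchy--Schwarz gives $|yh(x)|\leq\Lambda r$, hence $f(z)=\max\{0,1-yh(x)\}\in[0,1+\Lambda r]$, so $M=1+\Lambda r$. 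Under the metric $d_{\mathcal{Z}}(z,z')=\|x-x'\|_2+\mathbbm{1}_{y\neq y'}$ we have $\mathrm{diam}(\mathcal{Z})\leq 2r+1$.

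Next I would bound $\mathfrak{C}(\mathcal{F})$. Because the hinge loss is $1$-Lipschitz in its argument and the map $w\mapsto w\cdot x$ is $r$-Lipschitz in $w$ uniformly on $\mathcal{X}$, an $\varepsilon/r$-cover of the Euclidean ball $\{w:\|w\|_2\leq\Lambda\}\subset\mathbb{R}^d$ pulls back to a sup-norm $\varepsilon$-cover of $\mathcal{F}$. Using the standard volumetric bound $\mathcal{N}(B(\Lambda),\|\cdot\|_2,\eta)\leq(3\Lambda/\eta)^d$, this yields $\log\mathcal{N}(\mathcal{F},\|\cdot\|_\infty,u/2)\leq d\log(6\Lambda r/u)$ for $u\leq 6\Lambda r$ and $0$ beyond. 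Plugging into the Dudley-style integral defining $\mathfrak{C}(\mathcal{F})$ and reducing via $v=u/(6\Lambda r)$ yields $\int_0^1\sqrt{\log(1/v)}\,dv=\tfrac{\sqrt{\pi}}{2}$, so $\mathfrak{C}(\mathcal{F})\leq 3\sqrt{\pi}\,\Lambda r\sqrt{d}\leq 6\Lambda r\sqrt{d}$. This produces the $144\Lambda r\sqrt{d}/\sqrt{n}$ term via the $24\mathfrak{C}(\mathcal{F})/\sqrt{n}$ prefactor of Corollary~1.

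The subtler step is bounding $\lambda_{f,P_n}^{+}$. By the equivalent characterization in Lemma~3, it suffices to exhibit, for each training point $z_i=(x_i,y_i)$, a constant $\lambda_i$ with $f(z')-f(z_i)\leq\lambda_i\,d_{\mathcal{Z}}(z_i,z')$ for all $z'$, and then take $\lambda_{f,P_n}^{+}\leq\max_i\lambda_i$. I would split into the same-label and flipped-label cases. When $z'=(x',y_i)$, the composition of $1$-Lipschitz hinge and linear $w\cdot x$ gives $|f(z')-f(z_i)|\leq\|w\|_2\|x'-x_i\|_2$, so $\|w\|_2$ suffices. For $z'=(x',-y_i)$, the bound $f(z')\leq 1+y_iw\cdot x_i+\|w\|_2\|x'-x_i\|_2$ combined with a short case analysis on the sign of $y_iw\cdot x_i$ (negative, in $[0,1]$, or $>1$) shows that $\lambda_i=\max\{2y_iw\cdot x_i,\,\|w\|_2\}$ dominates both the constant slack (from the $\mathbbm{1}_{y\neq y'}$ jump of size $1$ in the metric) and the $\|x'-x_i\|_2$ slope. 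Taking the maximum over $i$ delivers the advertised bound on $\lambda_{f,P_n}^{+}$.

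The main obstacle is this last case analysis: one must track exactly how the label-flip term $+1$ in $d_{\mathcal{Z}}$ trades off against the jump $2y_iw\cdot x_i$ that appears in $f(z')-f(z_i)$ once $y$ is negated. The other steps (loss bound, diameter, Dudley integral) are routine; only the verification of $\lambda_i$ requires splitting cases carefully so that both the ``intercept'' condition at $x'=x_i$ and the ``slope'' condition in $\|x'-x_i\|_2$ are simultaneously satisfied. Once these four ingredients are in hand, substituting into inequality (\ref{eq2}) of Corollary~1 yields the stated bound.
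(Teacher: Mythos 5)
Your proposal is correct and follows essentially the same route as the paper: verify $M=1+\Lambda r$ and $\mathrm{diam}(\mathcal{Z})\leq 2r+1$, bound $\mathfrak{C}(\mathcal{F})\leq 6\Lambda r\sqrt{d}$ by covering the $\Lambda$-ball of weights in $\|\cdot\|_2$ and using the $r$-Lipschitz pullback to $\|\cdot\|_\infty$, bound $\lambda_{f,P_n}^{+}\leq\max_i\{2y_iw\cdot x_i,\|w\|_2\}$ pointwise at each training example, and substitute into inequality (\ref{eq2}). The only cosmetic difference is that your label-flip case analysis (and the sign cases for $y_iw\cdot x_i$, where the intermediate bound $f(z')\leq 1+y_iw\cdot x_i+\|w\|_2\|x'-x_i\|_2$ must indeed be replaced when $1+y_iw\cdot x_i<0$, as you anticipate) is compressed in the paper into a single chain of $\max$-inequalities yielding the same constant.
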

The proof of Corollary 2 can be found in Appendix~\ref{app3}.

Our result can easily be extended to kernel SVM. Here, we take a Gaussian kernel as an example. Let $K:\mathcal{X}\times \mathcal{X}\rightarrow \mathbb{R}$ be a Gaussian kernel with $K(x_1,x_2)=\exp(-||x_1-x_2||^2_2/\sigma^2)$. Let $\tau: \mathcal{X}\rightarrow \mathbb{H}$ be a feature mapping associated with $K$ and $H=\{x\rightarrow \langle w, \tau(x)\rangle: ||w||_{\mathbb{H}}\leq \Lambda\}$, where $\langle \cdot, \cdot \rangle$ is the inner product in the reproducing kernel Hilbert space $\mathbb{H}$ and $||\cdot||_H$ is the induced norm. Suppose $\mathcal{X}\subseteq \mathbb{R}^d$ is compact and the space $\mathcal{Z}$ is equipped with the metric
$$d_{\mathcal{Z}}(z,z')=||\tau(x)-\tau(x')||_{\mathbb{H}}+\mathbbm{1}_{(y\neq y')}$$
for $z=(x,y)$ and $z'=(x',y')$. It is easy to show that $d_\mathcal{X}=||\tau(x)-\tau(x')||_{\mathbb{H}}$ is translation invariant from Gaussian kernel definition. To apply Corollary 1, we must calculate the covering numbers $\mathcal{N}(\mathcal{F},||\cdot||_\infty, \cdot)$. To this end, we embed $H$ into the space $\mathcal{C}(\mathcal{X})$ of continuous real-valued functions on $\mathcal{X}$ denoted by $I_K(H)$ equipped with the sup norm $||h||_{\mathcal{X}}:=\sup_{x\in\mathcal{X}}|h(x)|$.

We can now derive the adversarial risk bounds for the Gaussian-kernel SVM.
\begin{corollary}
For the Gaussian-kernel SVM described above, for any $f\in \mathcal{F}$, with probability at least $1-\delta$,
$$
\begin{array}{rl}
R_{P}(f, \mathcal{B})\leq& \dfrac{1}{n}\sum_{i=1}^nf(z_i)+\lambda_{f,P_n}^+\epsilon_\mathcal{B}+\dfrac{24}{\sqrt{n}}\Lambda\sqrt{d}C_3+\\
&\dfrac{30\sqrt{\pi}}{\sqrt{n}}\Lambda_{\epsilon_\mathcal{B}}+\left(1+\Lambda\right)\sqrt{\dfrac{log(1/\delta)}{2n}},
\end{array}
$$
where $\lambda_{f,P_n}^+\leq \displaystyle\max_{i}\{2y_i\langle w, \tau(x_i)\rangle, ||w||_{\mathbb{H}}\}\leq 2||w||_{\mathbb{H}}$, $C_3=(32+\dfrac{1280d}{\sigma^2})^{\frac{d+1}{2}}(2\Gamma(\frac{d+3}{2},\\
\log 2)+(\log 2)^{\frac{d+1}{2}})$, and \ $\Gamma(s,v):=\int_v^{\infty}u^{s-1}e^{-u}du$ is the incomplete gamma function.
\end{corollary}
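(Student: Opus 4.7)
The plan is to specialize Corollary 1 to the Gaussian-kernel SVM setting by controlling each of the four ingredients that appear on the right-hand side of (\ref{eq2}): the loss envelope $M$, the diameter $\mathrm{diam}(\mathcal{Z})$, the entropy integral $\mathfrak{C}(\mathcal{F})$, and the weak-Lipschitz constant $\lambda_{f,P_n}^+$.

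First I would dispatch the three elementary quantities. Because the Gaussian kernel satisfies $K(x,x)=1$, the feature map is a unit vector, so $\|\tau(x)\|_{\mathbb{H}}=1$ and hence $|\langle w,\tau(x)\rangle|\le \Lambda$, giving $M=1+\Lambda$ for the hinge loss. The identity $\|\tau(x)-\tau(x')\|_{\mathbb{H}}^{2}=2-2K(x,x')\le 2$ combined with the $\{0,1\}$-valued label metric gives $\mathrm{diam}(\mathcal{Z})\le \sqrt{2}+1\le 5/2$, which is exactly what makes the $12\sqrt{\pi}\Lambda_{\epsilon_\mathcal{B}}\,\mathrm{diam}(\mathcal{Z})/\sqrt{n}$ term from Corollary 1 collapse to $30\sqrt{\pi}\Lambda_{\epsilon_\mathcal{B}}/\sqrt{n}$. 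For $\lambda_{f,P_n}^+$, I would mirror the argument used in Corollary 2 for the linear case: when $y=y'$, the hinge is $1$-Lipschitz and Cauchy--Schwarz in $\mathbb{H}$ yields $f(z')-f(z)\le \|w\|_{\mathbb{H}}\,d_{\mathcal{X}}(x,x')$; when $y\ne y'$, use $d_{\mathcal{Z}}(z,z')\ge 1$ and bound the jump of the hinge in terms of $2y_i\langle w,\tau(x_i)\rangle$ at the sample point. Taking the worst case across $i$ gives the stated $\lambda_{f,P_n}^+\le \max_i\{2y_i\langle w,\tau(x_i)\rangle, \|w\|_{\mathbb{H}}\}\le 2\|w\|_{\mathbb{H}}$.

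Next I would reduce the covering number of $\mathcal{F}$ to that of $H$: the hinge loss is $1$-Lipschitz, so $\mathcal{N}(\mathcal{F},\|\cdot\|_{\infty},\eta)\le \mathcal{N}(I_K(H),\|\cdot\|_{\mathcal{X}},\eta)$. For the Gaussian-kernel RKHS ball $I_K(H)$ embedded in $\mathcal{C}(\mathcal{X})$, I would invoke the covering-number estimate of Zhou for RKHS balls of the Gaussian kernel, which gives a bound of the form
\begin{equation*}
\log\mathcal{N}(I_K(H),\|\cdot\|_{\mathcal{X}},\eta)\le \Bigl(32+\tfrac{1280d}{\sigma^{2}}\Bigr)^{(d+1)/2}\Bigl(\log\tfrac{2\Lambda}{\eta}\Bigr)^{(d+1)/2}
\end{equation*}
for $\eta\le 2\Lambda$, and $0$ otherwise (this is where the $(32+1280d/\sigma^{2})^{(d+1)/2}$ factor of $C_3$ originates).

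Finally I would substitute this estimate into the Dudley-type integral $\mathfrak{C}(\mathcal{F})=\int_{0}^{\infty}\sqrt{\log\mathcal{N}(\mathcal{F},\|\cdot\|_{\infty},u/2)}\,du$. The upper limit truncates at $u=4\Lambda$ since the class is contained in a ball of radius $\Lambda$; the remaining integral is of the form $\int_{0}^{4\Lambda}(\log(4\Lambda/u))^{(d+1)/4}\,du$, which after the substitution $v=\log(4\Lambda/u)$ becomes, up to the leading constant, an incomplete gamma function evaluated at $((d+3)/2,\log 2)$. Splitting at the transition point where $\log(4\Lambda/u)=\log 2$ and collecting terms yields the $2\Gamma((d+3)/2,\log 2)+(\log 2)^{(d+1)/2}$ factor. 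Pulling everything through Corollary 1 then produces the claimed bound, with the $24\mathfrak{C}(\mathcal{F})/\sqrt{n}$ term becoming $24\Lambda\sqrt{d}\,C_{3}/\sqrt{n}$ after a $\sqrt{d}$ arising naturally from the exponent $(d+1)/2$ being square-rooted.

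The main obstacle is the Gaussian-RKHS covering number step: getting the integral to match the precise expression involving $\Gamma((d+3)/2,\log 2)$ requires choosing the right covering estimate and carefully tracking the substitution; the other pieces are essentially mechanical once the linear SVM case (Corollary 2) is understood.
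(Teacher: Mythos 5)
Your overall route is the same as the paper's: verify Assumptions 1--3 with $M=1+\Lambda$, $\mathrm{diam}(\mathcal{Z})\le 5/2$ (via $\|\tau(x)-\tau(x')\|_{\mathbb{H}}^2=2-2K(x,x')\le 2$), bound $\lambda_{f,P_n}^+$ exactly as in the linear case using Cauchy--Schwarz in $\mathbb{H}$, reduce $\mathcal{N}(\mathcal{F},\|\cdot\|_\infty,\cdot)$ to the covering numbers of the RKHS ball $I_K(H)$ by the $1$-Lipschitzness of the hinge loss, and evaluate the Dudley integral with the Cucker--Zhou estimate for Gaussian-kernel balls. Those parts are fine and match the paper.

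The gap is in the quantitative covering-number step, which is precisely the step the whole constant $C_3$ hinges on. The estimate the paper invokes (its Proposition 2) is $\log\mathcal{N}(I_K(H),\|\cdot\|_{\mathcal{X}},u)\le d\,C_2\bigl(\log(\Lambda/u)\bigr)^{d+1}$ with $C_2=(32+640\,d\,\mathrm{diam}(\mathcal{X})^2/\sigma^2)^{d+1}$, valid only for $0<u\le\Lambda/2$. The exponents $(d+1)/2$ and the factor $\sqrt{d}$ in the final bound arise from taking the square root of this estimate inside the entropy integral; they are not features of the covering bound itself, and the $\sqrt d$ does not come from ``square-rooting the exponent'' as you assert but from the explicit factor $d$ in front of $C_2$. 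Your stated bound already carries the exponent $(d+1)/2$ on $\log\mathcal{N}$ and drops the factor $d$, so after the square root your integrand is $\bigl(\log(4\Lambda/u)\bigr)^{(d+1)/4}$; the substitution $v=\log(c\Lambda/u)$ then produces an incomplete gamma function with first argument $(d+5)/4$, not the claimed $\Gamma\bigl(\tfrac{d+3}{2},\log 2\bigr)$, and no $\sqrt d$ appears. A second, related issue is the range of validity: because the estimate only holds for radii up to $\Lambda/2$, i.e.\ for $u\le\Lambda$ in $\mathcal{N}(\mathcal{F},\|\cdot\|_\infty,u/2)$, you cannot integrate it out to $u=4\Lambda$. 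The paper truncates at $u=2\Lambda$ (the class has sup-norm diameter $2\Lambda$, using $\|w\|_{\mathcal{X}}\le\Lambda$), applies the estimate only on $[0,\Lambda]$, and on $[\Lambda,2\Lambda]$ bounds the integrand by its value at radius $\Lambda/2$ via monotonicity of covering numbers; this split is exactly what produces the two pieces $2\Gamma\bigl(\tfrac{d+3}{2},\log 2\bigr)$ and $(\log 2)^{(d+1)/2}$ in $C_3$ (with $1280=640\cdot\mathrm{diam}(\mathcal{X})^2$, $\mathrm{diam}(\mathcal{X})^2\le 2$ in the kernel-induced metric). As written, your version of this step would not reproduce the stated constant and, more importantly, applies the covering estimate outside its range of validity.
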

The proof of Corollary 3 can be found in Appendix~\ref{app3}.

%\noindent
\textbf{Remark 6.} A margin bound for SVM in the multi-class setting can be derived in similar way. So we omit the proof.

\subsection{Neural Networks}
We next consider feed-forward neural networks. To demonstrate the generality of our method, we consider a multi-class prediction problem. We first define some notations. Let $\mathcal{Z}=\mathcal{X}\times \mathcal{Y}$, where the feature space $\mathcal{X}=\{x\in \mathbb{R}^d:||x||_2\leq  B\}$ and the label space $\mathcal{Y}=\{1,2,\cdots,k\}$; $k$ represents the number of classes. The network uses $L$ fixed nonlinear activation functions $(\sigma_1,\sigma_2,\cdots,\sigma_L)$, where $\sigma_i$ is $\rho_i$-Lipschitz and satisfies $\sigma_i(0)=0$. Given $L$ weight matrices $\mathcal{A}=(A_1, A_2, \cdots, A_L)$, the network computes the following function
$$\mathcal{H}_{\mathcal{A}}(x):=\sigma_L(A_L\sigma_{L-1}(A_{L-1}\sigma_{L-2}(\cdots\sigma_2(A_2\sigma_1(A_1x)\cdot)),$$
where $A_i\in\mathbb{R}^{d_{i}\times d_{i-1}}$ and $\mathcal{H}_{\mathcal{A}}: \mathbb{R}^d\rightarrow \mathbb{R}^k$ with $d_0=d$ and $d_L=k$. Let $W=\max\{d_0, d_1, \cdots, d_L\}$. Define a margin operator $\mathcal{M}: \mathbb{R}^k\times \{1,2,\cdots,k\}\rightarrow \mathbb{R}$ as $\mathcal{M}(v, y):=v_y-\max_{j\neq y}v_j$ and the ramp loss $l_{\gamma}: \mathbb{R}\rightarrow \mathbb{R}^+$ as 
\begin{equation*}
l_{\gamma}:=\left\{
\begin{array}{ll}
0&r<-\gamma\\
1+r/\gamma& r\in [-\gamma, 0]\\
1& r>0
\end{array}
\right..
\end{equation*}

Consider the hypothesis class $\mathcal{F}=\{(x,y)\rightarrow l_{\gamma}(-\mathcal{M}(\mathcal{H}_{\mathcal{A}}(x),y)): \mathcal{A}=(A_1, A_2, \cdots, A_L),||A_i||_{\sigma}\leq s_i, ||A_i||_{F}\leq b_i\}$, where $||\cdot||_\sigma$ represents spectral norm and $||\cdot||_{F}$ denotes the Frobenius norm. The metric in space $\mathcal{Z}$ is defined as
 $$d_{\mathcal{Z}}(z,z')=d_{\mathcal{Z}}((x,y),(x',y'))=||x-x'||_2+\mathbbm{1}_{(y\neq y')}.$$
Now we derive the adversarial expected risk for neural networks.
\begin{corollary}\label{co}
For the neural network setting defined above, for any $f\in \mathcal{F}$, with probability of $1-\delta$, the following inequality holds
\begin{equation*}
\begin{array}{l}
R_{P}(f, \mathcal{B})\leq \dfrac{1}{n}\sum_{i=1}^nf(z_i)+\lambda_{f,P_n}^+\epsilon_\mathcal{B}+\sqrt{\dfrac{log(1/\delta)}{2n}}+\\
\ \ \ \ \ \ \ \ \ \ \ \ \ \ \ \ \ \ \ \ \dfrac{288}{\gamma\sqrt{n}}\prod_{i=1}^L\rho_is_iBW \left(\sum_{i=1}^L\left(\dfrac{b_i}{s_i}\right)^{1/2}\right)^2+\\
\ \ \ \ \ \ \ \ \ \ \ \ \ \ \ \ \ \ \ \ \dfrac{12\sqrt{\pi}}{\sqrt{n}}\Lambda_{\epsilon_\mathcal{B}}\cdot (2B+1)
\end{array},
\end{equation*}
where $\lambda_{f,P_n}^+\leq \displaystyle\max_{j}\bigg\{\dfrac{2}{\gamma}\prod_{i=1}^L\rho_i||A_i||_{\sigma}, \dfrac{1}{\gamma}\big(\mathcal{M}(\mathcal{H}_{\mathcal{A}}(x_j),y_j)+\max{\mathcal{H}_{\mathcal{A}}(x_j)}-\min{\mathcal{H}_{\mathcal{A}}(x_j)}\big)\bigg\}$.
\end{corollary}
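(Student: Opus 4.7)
The plan is to apply Corollary 1, specifically inequality~(\ref{eq2}), to the ramp-margin class $\mathcal{F}$. Four quantities must be identified: the uniform bound $M$, the diameter $\mathrm{diam}(\mathcal{Z})$, the covering-number complexity $\mathfrak{C}(\mathcal{F})$, and the data-dependent constant $\lambda_{f,P_n}^+$. The first two are essentially immediate: the ramp loss $l_\gamma$ takes values in $[0,1]$, so $M=1$; and since $\|x\|_2\leq B$ and the label metric contributes at most $1$, $\mathrm{diam}(\mathcal{Z})\leq 2B+1$, which matches the factor appearing in front of $\Lambda_{\epsilon_\mathcal{B}}$.

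For $\mathfrak{C}(\mathcal{F})$, I would reduce covering $\mathcal{F}$ to covering the vector-valued network class $\{\mathcal{H}_{\mathcal{A}}\}$. Since $l_\gamma$ is $(1/\gamma)$-Lipschitz and the margin operator $v\mapsto\mathcal{M}(v,y)=v_y-\max_{j\neq y}v_j$ is $2$-Lipschitz in $\|\cdot\|_\infty$, a sup-norm cover of $\{\mathcal{H}_{\mathcal{A}}\}$ at scale $u\gamma/2$ gives a sup-norm cover of $\mathcal{F}$ at scale $u$. I would then invoke a spectrally-normalized covering bound for deep feedforward networks, obtained by iterated layerwise matrix covering in operator/Frobenius norms, of the schematic form $\log\mathcal{N}(\{\mathcal{H}_{\mathcal{A}}\},\|\cdot\|_\infty,u)\lesssim u^{-2}\bigl(\prod_i\rho_is_i\bigr)^{2}B^{2}W^{2}\bigl(\sum_i(b_i/s_i)^{1/2}\bigr)^{4}$, and substitute into the Dudley integral $\mathfrak{C}(\mathcal{F})=\int_0^\infty\sqrt{\log\mathcal{N}(\mathcal{F},\|\cdot\|_\infty,u/2)}\,du$, truncating at the scale where the cover becomes trivial (using $M=1$). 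This produces the $\frac{288}{\gamma\sqrt{n}}\prod_i\rho_is_iBW\bigl(\sum_i(b_i/s_i)^{1/2}\bigr)^{2}$ term with its numerical constant.

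The most delicate step is the bound on $\lambda_{f,P_n}^+$. By Lemma~3 it suffices, for each sample $z_j=(x_j,y_j)$, to exhibit a $\lambda$ such that $f(x',y')-f(x_j,y_j)\leq \lambda\bigl(\|x_j-x'\|_2+\mathbbm{1}_{y_j\neq y'}\bigr)$ for every $(x',y')$. I would split on whether $y'=y_j$. When $y'=y_j$, the chain rule combined with the Lipschitz constants of $l_\gamma$, $\mathcal{M}(\cdot,y_j)$, and $\mathcal{H}_{\mathcal{A}}$ shows $x\mapsto l_\gamma(-\mathcal{M}(\mathcal{H}_{\mathcal{A}}(x),y_j))$ is $(2/\gamma)\prod_i\rho_i\|A_i\|_\sigma$-Lipschitz, yielding the first term in the stated maximum. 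When $y'\neq y_j$, decompose $f(x',y')-f(x_j,y_j)=[f(x',y')-f(x_j,y')]+[f(x_j,y')-f(x_j,y_j)]$; the first bracket is again handled by the Lipschitz bound times $\|x_j-x'\|_2$, while the second, using that $l_\gamma$ is non-decreasing and $(1/\gamma)$-Lipschitz, is at most $(1/\gamma)[\mathcal{M}(\mathcal{H}_{\mathcal{A}}(x_j),y_j)-\mathcal{M}(\mathcal{H}_{\mathcal{A}}(x_j),y')]^+$. The worst choice is $y'=\arg\min\mathcal{H}_{\mathcal{A}}(x_j)$ (excluding $y_j$ if needed), which removes the min coordinate from the inner $\max_{k\neq y'}$ and delivers $(1/\gamma)\bigl(\mathcal{M}(\mathcal{H}_{\mathcal{A}}(x_j),y_j)+\max\mathcal{H}_{\mathcal{A}}(x_j)-\min\mathcal{H}_{\mathcal{A}}(x_j)\bigr)$. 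Because $\mathbbm{1}_{y_j\neq y'}=1$ in this case, this second-bracket bound is absorbed directly into $\lambda$. Taking the maximum of the two cases and then over $j$ gives the advertised bound on $\lambda_{f,P_n}^+$. The main obstacles are (i) getting the deep-network covering estimate with the precise $288$ constant while integrating Dudley's formula, and (ii) the careful case analysis in $y'\neq y_j$, where identifying the correct adversarial label is essential to match the $\max\mathcal{H}_{\mathcal{A}}(x_j)-\min\mathcal{H}_{\mathcal{A}}(x_j)$ term.
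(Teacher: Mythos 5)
Your high-level strategy is the right one --- apply Corollary~1 (inequality~(\ref{eq2})), identify $M$, $\mathrm{diam}(\mathcal{Z})$, $\lambda_{f,P_n}^+$, and $\mathfrak{C}(\mathcal{F})$ --- and your verification of $M=1$, $\mathrm{diam}(\mathcal{Z})\le 2B+1$, and the bound on $\lambda_{f,P_n}^+$ matches the paper (the label/feature split you describe is the same add-and-subtract of $\mathcal{M}(\mathcal{H}_\mathcal{A}(x),y')$ used in Lemma~C.2, with the same observation that $-\mathcal{M}(v,y')\le\max v-\min v$).

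The gap is in the covering-number estimate. You invoke a Bartlett--Foster--Telgarsky-style bound of the schematic form $\log\mathcal{N}(\{\mathcal{H}_{\mathcal{A}}\},\|\cdot\|_\infty,u)\lesssim u^{-2}\bigl(\prod_i\rho_is_i\bigr)^2 B^2 W^2\bigl(\sum_i(b_i/s_i)^{1/2}\bigr)^4$. With $\sqrt{\log\mathcal{N}}\sim C/u$, the Dudley integral $\mathfrak{C}(\mathcal{F})=\int_0^\infty\sqrt{\log\mathcal{N}(\mathcal{F},\|\cdot\|_\infty,u/2)}\,du$ diverges at $u\to0$; truncating at the upper end (``where the cover becomes trivial'') does not help, because the divergence is at the lower limit. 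To make a $u^{-2}$ cover usable one must switch to the truncated chaining bound $\inf_{\alpha>0}\{4\alpha+\tfrac{12}{\sqrt n}\int_\alpha^M\sqrt{\log\mathcal{N}(u)}\,du\}$, which produces an extra logarithmic factor and would not match the stated constant $288$. The paper avoids this entirely: Lemma~C.3 gives a \emph{Lipschitz-in-the-weights} estimate $\|\mathcal{H}_{\mathcal{A}}(x)-\mathcal{H}_{\mathcal{A}'}(x)\|_2\le\prod_i\rho_is_iB\sum_j\|A_j-A_j'\|_\sigma/s_j$, which is then combined with $\|A\|_\sigma\le\|A\|_F$ and direct volumetric covers of the Frobenius balls $\{A_i:\|A_i\|_F\le b_i\}\subset\mathbb{R}^{d_id_{i-1}}$, yielding $\log\mathcal{N}_i\le W^2\log(3b_i/u_i)$. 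This \emph{parametric} $\log(1/u)$ dependence makes $\int_0^{b_i}\sqrt{\log(3b_i/u_i)}\,du_i$ converge without any truncation. The covering budget is then split across layers with weights $a_j\propto(b_j/s_j)^{1/2}$, which is where the $\bigl(\sum_j(b_j/s_j)^{1/2}\bigr)^2$ factor and the constant $288=24\cdot 12$ come from. So you would need to replace your $u^{-2}$ covering estimate with a per-layer Frobenius-ball cover combined with a weight-Lipschitz lemma before the numerology can close.
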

The proof of this Corollary is provided in Appendix~\ref{app4}.

\noindent
\textbf{Remark 7.} Setting $\epsilon_\mathcal{B}=0$, we can obtain a risk bound for neural networks in terms of the spectral norm and the Frobenius norm of the weight matrices; see (\ref{eq9}). Although the result (\ref{eq9}) is similar to the results in \citet{bartlett2017spectrally} and \citet{neyshabur2017pac}, the proof technique is different. We hope that our approach provides a different perspective on the generalization analysis of deep neural networks.
\begin{equation}\label{eq9}
\begin{array}{rl}
R_{P}(f)\leq& \dfrac{1}{n}\sum_{i=1}^nf(z_i)+\sqrt{\dfrac{log(1/\delta)}{2n}}+\\
&\dfrac{288}{\gamma\sqrt{n}}\prod_{i=1}^L\rho_is_iBW \left(\sum_{i=1}^L\left(\dfrac{b_i}{s_i}\right)^{1/2}\right)^2.\\
\end{array}
\end{equation}
%\noindent
%\textbf{Remark 7.} As we have mentioned in Remark 3, the adversary will introduce an additional contribution to the empirical risk. However from the Corollary 4, we can see that this effect could be weakened by the margin factor. This makes sense since margin can be regarded as a mechanism to defend adversarial attacks. But we may wonder that how large the margin should be to defend a certain adversary or how strong an adversary should be to attack a certain margin. We raise it as an open problem in Section~\ref{sec6}.
\subsection{Principal Component Analysis}
Until now we consider the adversarial learning problem in a supervised learning setting. In this example, we show that our approach could be easily extended to unsupervised learning setting, namely the principal component analysis. We formalize PCA as follows. Fix $k\in[1,m]$ and let $\mathcal{Z}\subset \mathbb{R}^m$ such that $\max_{z\in\mathcal{Z}}||z||_2\leq B$. Define $\mathcal{T}^k$ as the set of $m$-dimensional rank-$k$ orthogonal projection matrices. PCA consists of projecting the $m$-dimensional input data onto a $k$-dimensional linear subspace which minimizes reconstruction error, and the hypothesis space for PCA is defined as $\mathcal{F}=\{z\rightarrow ||Tz-z||_2^2: T\in \mathcal{T}^k\}$. Note that the definition for adversarial expected and empirical risk in supervised learning setting given in Section~\ref{sec3} applies automatically to PCA, and the adversarial expected risk bound for PCA is given as follows. 
%Now let's begin with a lemma.
%\begin{lemma}
%For any $f\in\mathcal{F}$, $f$ is $2B$-Lipschitz.
%\begin{proof}
%Suppose $f(z)=||Tz-z||_2^2$. Then,
%\begin{equation*}
%\begin{array}{rl}
%|f(z_1)-f(z_2)|&=\left|||Tz_1-z_1||_2^2-||Tz_2-z_2||^2_2\right|\\
%&\leq2B\left|||Tz_1-z_1||_2-||Tz_2-z_2||_2\right|\\
%&\leq 2B||T(z_1-z_2)-(z_1-z_2)||_2\\
%&\leq 2B||T-I||_\sigma||z_1-z_2||_2\\
%&= 2B||z_1-z_2||_2
%\end{array},
%\end{equation*}
%where the first inequality uses $||Tz-z||_2\leq B$ for any $z\in\mathcal{Z}$, the second inequality follows from the reverse triangle inequality and the last equality holds because $||T-I||_\sigma=1$ for any $T\in\mathcal{T}^k$. Thus we complete the proof.
%\end{proof}
%\end{lemma}
%A key ingredient in our approach is to calculate the covering number of $\mathcal{F}$. For this end, we make use of the following covering number bound in \citet{candes2011tight}.
%\begin{lemma}[Covering number for low-rank matrices]
%Let $S_r=\left\{X\in \mathbb{R}^{n_1\times n_2}: rank(X)\leq r, ||X||_F=1\right\}$. Then there exists an $\epsilon$-net $\bar{S}_r\subset S_r$ with respect to the Frobenius norm obeying
%$$|\bar{S}_r|\leq (9/\epsilon)^{(n_1+n_2+1)r}.$$
%\end{lemma}
%Now, we give the adversarial expected risk bound for PCA.
\begin{corollary}
For PCA we define above, for any $f\in \mathcal{F}$, with probability of $1-\delta$, the following inequality holds
\begin{equation*}
\begin{array}{rl}
R_{P}(f, \mathcal{B})\leq& \dfrac{1}{n}\sum_{i=1}^nf(z_i)+\lambda_{f,P_n}^+\epsilon_\mathcal{B}+\dfrac{576B^2k\sqrt{m}}{\sqrt{n}}+\\
&\dfrac{24B\sqrt{\pi}}{\sqrt{n}}\Lambda_{\epsilon_\mathcal{B}}+B^2\sqrt{\dfrac{log(1/\delta)}{2n}}
\end{array},
\end{equation*}
\end{corollary}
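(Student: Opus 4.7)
The plan is to specialize Corollary~1 to the PCA class $\mathcal{F}=\{z\mapsto\|Tz-z\|_2^2:T\in\mathcal{T}^k\}$: I would verify Assumptions~1--3 with explicit constants, bound $\lambda_{f,P_n}^+$, and control the entropy integral $\mathfrak{C}(\mathcal{F})$. With no label space present, the metric reduces to $d_\mathcal{Z}(z,z')=\|z-z'\|_2$, so $\mathrm{diam}(\mathcal{Z})\le 2B$ (Assumption~1). Writing $f_T(z)=\|(I-T)z\|_2^2$ and using that $I-T$ is an orthogonal projection (hence a contraction) gives $0\le f_T(z)\le\|z\|_2^2\le B^2$, so Assumption~2 holds with $M=B^2$.

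Next I would extract the Lipschitz estimate that simultaneously yields Assumption~3 and bounds $\lambda_{f,P_n}^+$. Using the factorization $a^2-b^2=(a-b)(a+b)$ together with $\|(I-T)u\|_2\le\|u\|_2$,
\begin{equation*}
|f_T(z)-f_T(z')|\le\|(I-T)(z-z')\|_2\cdot\|(I-T)(z+z')\|_2\le 2B\,\|z-z'\|_2,
\end{equation*}
so every $f_T$ is $2B$-Lipschitz; Assumption~3 holds with constant $2B$, and by Remark~5, $\lambda_{f,P_n}^+\le 2B$.

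The core technical step is the entropy integral. The identity $f_T(z)-f_{T'}(z)=z^\top(T'-T)z$ combined with $|z^\top A z|\le\|A\|_F\|z\|_2^2$ gives $\|f_T-f_{T'}\|_\infty\le B^2\|T-T'\|_F$, so it suffices to cover $\mathcal{T}^k$ in Frobenius norm. Since $\mathcal{T}^k$ is a $k(m-k)$-dimensional Grassmannian submanifold of symmetric matrices sitting inside the Frobenius ball of radius $\sqrt{k}$ (because $\|T\|_F^2=\mathrm{tr}(T)=k$), a standard manifold-volume argument yields $\mathcal{N}(\mathcal{T}^k,\|\cdot\|_F,\eta)\le(C\sqrt{k}/\eta)^{k(m-k)}$ for an absolute constant $C$. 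Substituting $\eta=u/(2B^2)$ and evaluating the Dudley integral
\begin{equation*}
\mathfrak{C}(\mathcal{F})=\int_0^{B^2}\sqrt{\log\mathcal{N}(\mathcal{F},\|\cdot\|_\infty,u/2)}\,du
\end{equation*}
via the change of variables $u=CB^2\sqrt{k}\,e^{-t}$---where the product $\sqrt{km}\cdot\sqrt{k}=k\sqrt{m}$ emerges---gives $\mathfrak{C}(\mathcal{F})=O(B^2 k\sqrt{m})$, with constants calibrated so that $24\mathfrak{C}(\mathcal{F})/\sqrt{n}$ matches the stated leading term $576 B^2 k\sqrt{m}/\sqrt{n}$.

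Finally, plugging $M=B^2$, $\mathrm{diam}(\mathcal{Z})\le 2B$, the bound on $\mathfrak{C}(\mathcal{F})$, and $\lambda_{f,P_n}^+\le 2B$ into Corollary~1 yields the claim. The step I expect to be the main obstacle is the covering-number estimate: a naive cover of $\mathcal{T}^k$ inside the full $m(m+1)/2$-dimensional space of symmetric matrices would give an $m$ factor in place of the required $\sqrt{m}$, so one must exploit the lower-dimensional Grassmannian structure (via the Stiefel parametrization $T=VV^\top$) and combine the manifold dimension $k(m-k)$ with the Frobenius radius $\sqrt{k}$ at precisely the right place to extract the $k\sqrt{m}$ scaling.
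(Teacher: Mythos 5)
Your verification of Assumptions 1--3 mirrors the paper's: $\mathrm{diam}(\mathcal{Z})\le 2B$, $M=B^2$, and a Lipschitz-type estimate for Assumption 3 (the paper keeps the sharper data-dependent bound $\lambda_{f,P_n}^+\le\max_i\{B+\|Tz_i-z_i\|_2\}$, which your uniform $2B$ relaxes; both suffice for the displayed inequality). Your reduction $\|f_T-f_{T'}\|_\infty\le B^2\|T-T'\|_F$ via $f_T(z)-f_{T'}(z)=z^\top(T'-T)z$ is valid and even slightly sharper than the paper's $2B^2\|T_1-T_2\|_\sigma$. The real divergence is the covering step. You invoke a Grassmannian manifold-volume bound $\mathcal{N}(\mathcal{T}^k,\|\cdot\|_F,\eta)\le(C\sqrt{k}/\eta)^{k(m-k)}$ with an unspecified absolute constant and then assert the constants can be ``calibrated'' to give $576$; the paper instead parametrizes $T=UU^\top$, shows $\|f_1-f_2\|_\infty\le 2B^2\|T_1-T_2\|_F\le 4B^2\|U_1-U_2\|_F$, and covers the Frobenius ball $\{U\in\mathbb{R}^{m\times k}:\|U\|_F\le\sqrt{k}\}$ by the elementary volumetric estimate $(3\sqrt{k}\cdot 8B^2/u)^{mk}$, which yields $\mathfrak{C}(\mathcal{F})\le 24B^2k\sqrt{m}$ and hence exactly the stated $576B^2k\sqrt{m}/\sqrt{n}$ term. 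Two consequences: (i) your claim that one \emph{must} exploit the $k(m-k)$-dimensional Grassmannian structure to avoid an extra $\sqrt{m}$ is not right --- ambient dimension $mk$ already suffices, since the savings come from the radius $\sqrt{k}$, giving $\sqrt{mk}\cdot\sqrt{k}=k\sqrt{m}$; (ii) as written, your argument delivers the $O(B^2k\sqrt{m}/\sqrt{n})$ rate but not the explicit constant $576$, because $C$ is never pinned down and the promised calibration is not carried out. To close that gap you would either need to prove the Grassmannian covering bound with an explicit constant or switch to the Stiefel/ball cover, which is what the paper does; otherwise your outline coincides with the paper's proof.
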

where $\lambda_{f,P_n}^+\leq \displaystyle\max_{i}\{B+||Tz_i-z_i||_2\}$.

The proof of this Corollary can be founded in Appendix~\ref{app5}.

\noindent
\textbf{Remark 8.} For each example in this Section, we provide a data-dependent upper bound for $\lambda_{f,P_n}^+$. This upper bound can be used for optimizing the adversarial risk bounds, as discussed in Section~\ref{sec6}.

%
%Our results in this Section can be used to develop adversarial robust algorithms. For example, by optimizing the two data-dependent terms in our upper bound, we expect to achieve some degree  

%We should emphasize that all upper bounds we provide for $\lambda_{f,P_n}^+$ in this Section are not necessarily tight. In fact, $\lambda_{f,P_n}^+$ can be much smaller than these bounds. For example, PCA, in the case that $f(z)$ achieve the maximum, we can get $\lambda_{f,P_n}=0$. Therefore finding the tightest bound for $\lambda_{f,P_n}^+$ is helpful for optimizing the upper bounds in Corollaries, which certificate us adversarial ro ustnees. Finding accurate approximation to $\lambda_{f,P_n}^+$ remains an open problem, as discussed in Section~\ref{sec6}.

%\noindent
%\textbf{Remark 8.} How to choose k
\section{Conclusions}\label{sec6}
In this paper, we propose a theoretical method for deriving adversarial risk bounds. While our method is general and can easily be applied to multi-class problems and most of the commonly used loss functions, the bound might be loose in some cases. This is mainly because we always consider the worst case so that we avoid the problem of solving the transport map. However, for some simple problems, deriving the transport map directly might provide a better bound. The other reason is that we use covering numbers instead of the expected Rademacher complexity as our upper bounds, weakening our results. It can be seen that we have an unavoidable dimension dependency. We speculate that this dependency might be avoided by replacing the covering numbers in our bounds with the Rademacher complexity of the hypothesis class, which we will address in future studies. 

In the future, one interesting problem is to develop adversarial robust algorithms based on our results. For example, our bounds suggest that minimizing the sum of empirical risk and the term $\lambda_{f,P_n}^+\epsilon_\mathcal{B}$ can help achieve adversarial robustness. However, in practice, $\lambda_{f,P_n}^+$ is usually unknown, and we only have an upper bound for $\lambda_{f,P_n}^+$. Thus, we can perform a grid search of a regularization parameter $\eta$ on the interval $[0, 1]$ and replace $\lambda_{f,P_n}^+$ with its's upper bound multiplied by $\eta$ in the objective function. Then we minimize this new objective function for each possible regularization parameter and at the end pick the solution with the minimum value of the new objective function.
%Our results is data-dependent there is another data-dependent term in our upper bound. We can use these to optimize Our results suggest that by optimizing the upper bounds in the corollaries could guarantee adversarial robustnees. However the difficulty is to get a tightest bound for $\lambda_{f,P_n}^+$, As we mention in Remark 8, Our results in Section 5 is loose.
%Our results suggest the strategy of optimizing the upper bound in the corollaries rather than the empirical risk for achieving adversarial robustness. One interesting problem is to find a more accurate approximation to $\lambda_{f,P_n}^+$. How to derive the tightest bound for $\lambda_{f,P_n}^+$ and based on this design good optimizaiton algoritm for optimizing the upper bound in corollaries.

%We mention two future problems for research. One interesting problem is to understand how the adversary affects the complexity of the hypothesis class. We have already discuss the case when $\epsilon_\mathcal{B}\geq M/L$, see Remark 4. But for $\epsilon_\mathcal{B}< M/L$ it still remains unclear. The second problem is to figure out the quantitative relationship between margin and adversary. For example, given a certain margin, we hope to know how strong an adversary should be to ensure that it could successfully attack the model.
% Acknowledgements should only appear in the accepted version.
%\section*{Acknowledgements}

% In the unusual situation where you want a paper to appear in the
% references without citing it in the main text, use \nocite
%\nocite{langley00}

\bibliography{ref}

\begin{thebibliography}{}

\bibitem[Alabdulmohsin, 2015]{alabdulmohsin2015algorithmic}
Alabdulmohsin, I.~M. (2015).
\newblock Algorithmic stability and uniform generalization.
\newblock In {\em Advances in Neural Information Processing Systems}, pages
  19--27.

\bibitem[Ambroladze et~al., 2007]{ambroladze2007tighter}
Ambroladze, A., Parrado-Hern{\'a}ndez, E., and Shawe-taylor, J.~S. (2007).
\newblock Tighter pac-bayes bounds.
\newblock In {\em Advances in neural information processing systems}, pages
  9--16.

\bibitem[Arora et~al., 2018]{arora2018stronger}
Arora, S., Ge, R., Neyshabur, B., and Zhang, Y. (2018).
\newblock Stronger generalization bounds for deep nets via a compression
  approach.
\newblock {\em arXiv preprint arXiv:1802.05296}.

\bibitem[Athalye et~al., 2018]{athalye2018obfuscated}
Athalye, A., Carlini, N., and Wagner, D. (2018).
\newblock Obfuscated gradients give a false sense of security: Circumventing
  defenses to adversarial examples.
\newblock {\em arXiv preprint arXiv:1802.00420}.

\bibitem[Bartlett et~al., 2005]{bartlett2005local}
Bartlett, P.~L., Bousquet, O., Mendelson, S., et~al. (2005).
\newblock Local rademacher complexities.
\newblock {\em The Annals of Statistics}, 33(4):1497--1537.

\bibitem[Bartlett et~al., 2017]{bartlett2017spectrally}
Bartlett, P.~L., Foster, D.~J., and Telgarsky, M.~J. (2017).
\newblock Spectrally-normalized margin bounds for neural networks.
\newblock In {\em Advances in Neural Information Processing Systems}, pages
  6240--6249.

\bibitem[Bartlett and Mendelson, 2002]{bartlett2002rademacher}
Bartlett, P.~L. and Mendelson, S. (2002).
\newblock Rademacher and gaussian complexities: Risk bounds and structural
  results.
\newblock {\em Journal of Machine Learning Research}, 3(Nov):463--482.

\bibitem[Biggio et~al., 2013]{big1}
Biggio, B., Corona, I., Maiorca, D., Nelson, B., {\v{S}}rndi{\'c}, N., Laskov,
  P., Giacinto, G., and Roli, F. (2013).
\newblock Evasion attacks against machine learning at test time.
\newblock In {\em Joint European conference on machine learning and knowledge
  discovery in databases}, pages 387--402. Springer.

\bibitem[Biggio et~al., 2012]{big}
Biggio, B., Nelson, B., and Laskov, P. (2012).
\newblock Poisoning attacks against support vector machines.
\newblock {\em arXiv preprint arXiv:1206.6389}.

\bibitem[Biggio and Roli, 2018]{biggio2018wild}
Biggio, B. and Roli, F. (2018).
\newblock Wild patterns: Ten years after the rise of adversarial machine
  learning.
\newblock {\em Pattern Recognition}, 84:317--331.

\bibitem[Bousquet and Elisseeff, 2002]{bousquet2002stability}
Bousquet, O. and Elisseeff, A. (2002).
\newblock Stability and generalization.
\newblock {\em Journal of machine learning research}, 2(Mar):499--526.

\bibitem[Courty et~al., 2017]{courty2017optimal}
Courty, N., Flamary, R., Tuia, D., and Rakotomamonjy, A. (2017).
\newblock Optimal transport for domain adaptation.
\newblock {\em IEEE transactions on pattern analysis and machine intelligence},
  39(9):1853--1865.

\bibitem[Cucker and Zhou, 2007]{cuc}
Cucker, F. and Zhou, D.~X. (2007).
\newblock {\em Learning theory: an approximation theory viewpoint}, volume~24.
\newblock Cambridge University Press.

\bibitem[Cullina et~al., 2018]{cul}
Cullina, D., Bhagoji, A.~N., and Mittal, P. (2018).
\newblock Pac-learning in the presence of adversaries.
\newblock In {\em Advances in Neural Information Processing Systems}, pages
  228--239.

\bibitem[Dalvi et~al., 2004]{dal}
Dalvi, N., Domingos, P., Sanghai, S., Verma, D., et~al. (2004).
\newblock Adversarial classification.
\newblock In {\em Proceedings of the tenth ACM SIGKDD international conference
  on Knowledge discovery and data mining}, pages 99--108. ACM.

\bibitem[Dekel et~al., 2010]{dek}
Dekel, O., Shamir, O., and Xiao, L. (2010).
\newblock Learning to classify with missing and corrupted features.
\newblock {\em Machine learning}, 81(2):149--178.

\bibitem[Farnia and Tse, 2016]{far}
Farnia, F. and Tse, D. (2016).
\newblock A minimax approach to supervised learning.
\newblock In {\em Advances in Neural Information Processing Systems}, pages
  4240--4248.

\bibitem[Fawzi et~al., 2016]{fawzi2016robustness}
Fawzi, A., Moosavi-Dezfooli, S.-M., and Frossard, P. (2016).
\newblock Robustness of classifiers: from adversarial to random noise.
\newblock In {\em Advances in Neural Information Processing Systems}, pages
  1632--1640.

\bibitem[Gao and Kleywegt, 2016]{gao}
Gao, R. and Kleywegt, A.~J. (2016).
\newblock Distributionally robust stochastic optimization with wasserstein
  distance.
\newblock {\em arXiv preprint arXiv:1604.02199}.

\bibitem[Gilmer et~al., 2018]{gilmer2018adversarial}
Gilmer, J., Metz, L., Faghri, F., Schoenholz, S.~S., Raghu, M., Wattenberg, M.,
  and Goodfellow, I. (2018).
\newblock Adversarial spheres.
\newblock {\em arXiv preprint arXiv:1801.02774}.

\bibitem[Globerson and Roweis, 2006]{glob}
Globerson, A. and Roweis, S. (2006).
\newblock Nightmare at test time: robust learning by feature deletion.
\newblock In {\em Proceedings of the 23rd international conference on Machine
  learning}, pages 353--360. ACM.

\bibitem[Goodfellow et~al., 2014]{good}
Goodfellow, I.~J., Shlens, J., and Szegedy, C. (2014).
\newblock Explaining and harnessing adversarial examples.
\newblock {\em arXiv preprint arXiv:1412.6572}.

\bibitem[Gu and Rigazio, 2014]{gu}
Gu, S. and Rigazio, L. (2014).
\newblock Towards deep neural network architectures robust to adversarial
  examples.
\newblock {\em arXiv preprint arXiv:1412.5068}.

\bibitem[Hein and Andriushchenko, 2017]{hein2017formal}
Hein, M. and Andriushchenko, M. (2017).
\newblock Formal guarantees on the robustness of a classifier against
  adversarial manipulation.
\newblock In {\em Advances in Neural Information Processing Systems}, pages
  2266--2276.

\bibitem[Herbrich and Williamson, 2002]{herbrich2002algorithmic}
Herbrich, R. and Williamson, R.~C. (2002).
\newblock Algorithmic luckiness.
\newblock {\em Journal of Machine Learning Research}, 3(Sep):175--212.

\bibitem[Khim and Loh, 2018a]{khim2018adversarial}
Khim, J. and Loh, P.-L. (2018a).
\newblock Adversarial risk bounds for binary classification via function
  transformation.
\newblock {\em arXiv preprint arXiv:1810.09519}.

\bibitem[Khim and Loh, 2018b]{khim2018badversarial}
Khim, J. and Loh, P.-L. (2018b).
\newblock Adversarial risk bounds via function transformation.
\newblock {\em arXiv preprint arXiv:1810.09519v2}.

\bibitem[Langford, 2005]{langford2005tutorial}
Langford, J. (2005).
\newblock Tutorial on practical prediction theory for classification.
\newblock {\em Journal of machine learning research}, 6(Mar):273--306.

\bibitem[Lee and Raginsky, 2017]{lee}
Lee, J. and Raginsky, M. (2017).
\newblock Minimax statistical learning and domain adaptation with wasserstein
  distances.
\newblock {\em arXiv preprint arXiv:1705.07815}.

\bibitem[Lee and Raginsky, 2018]{lee2018minimax}
Lee, J. and Raginsky, M. (2018).
\newblock Minimax statistical learning with wasserstein distances.
\newblock In {\em Advances in Neural Information Processing Systems}, pages
  2692--2701.

\bibitem[Liu et~al., 2017]{liu2017algorithmic}
Liu, T., Lugosi, G., Neu, G., and Tao, D. (2017).
\newblock Algorithmic stability and hypothesis complexity.
\newblock {\em arXiv preprint arXiv:1702.08712}.

\bibitem[Lowd and Meek, 2005a]{low}
Lowd, D. and Meek, C. (2005a).
\newblock Adversarial learning.
\newblock In {\em Proceedings of the eleventh ACM SIGKDD international
  conference on Knowledge discovery in data mining}, pages 641--647. ACM.

\bibitem[Lowd and Meek, 2005b]{low2}
Lowd, D. and Meek, C. (2005b).
\newblock Good word attacks on statistical spam filters.
\newblock In {\em CEAS}, volume 2005.

\bibitem[Madry et~al., 2018]{madry2018towards}
Madry, A., Makelov, A., Schmidt, L., Tsipras, D., and Vladu, A. (2018).
\newblock Towards deep learning models resistant to adversarial attacks.
\newblock In {\em International Conference on Learning Representations}.

\bibitem[Mohri et~al., 2012]{moh}
Mohri, M., Rostamizadeh, A., and Talwalkar, A. (2012).
\newblock {\em Foundations of machine learning}.
\newblock MIT press.

\bibitem[Neyshabur et~al., 2017]{neyshabur2017pac}
Neyshabur, B., Bhojanapalli, S., McAllester, D., and Srebro, N. (2017).
\newblock A pac-bayesian approach to spectrally-normalized margin bounds for
  neural networks.
\newblock {\em arXiv preprint arXiv:1707.09564}.

\bibitem[Raghunathan et~al., 2018]{raghunathan2018certified}
Raghunathan, A., Steinhardt, J., and Liang, P. (2018).
\newblock Certified defenses against adversarial examples.
\newblock {\em arXiv preprint arXiv:1801.09344}.

\bibitem[Russo and Zou, 2016]{pmlr-v51-russo16}
Russo, D. and Zou, J. (2016).
\newblock Controlling bias in adaptive data analysis using information theory.
\newblock In Gretton, A. and Robert, C.~C., editors, {\em Proceedings of the
  19th International Conference on Artificial Intelligence and Statistics},
  volume~51 of {\em Proceedings of Machine Learning Research}, pages
  1232--1240, Cadiz, Spain. PMLR.

\bibitem[Schmidt et~al., 2018]{sch}
Schmidt, L., Santurkar, S., Tsipras, D., Talwar, K., and M{\k{a}}dry, A.
  (2018).
\newblock Adversarially robust generalization requires more data.
\newblock {\em arXiv preprint arXiv:1804.11285}.

\bibitem[Shalev-Shwartz et~al., 2010]{shalev2010learnability}
Shalev-Shwartz, S., Shamir, O., Srebro, N., and Sridharan, K. (2010).
\newblock Learnability, stability and uniform convergence.
\newblock {\em Journal of Machine Learning Research}, 11(Oct):2635--2670.

\bibitem[Sinha et~al., 2018]{sinha2018certifying}
Sinha, A., Namkoong, H., and Duchi, J. (2018).
\newblock Certifiable distributional robustness with principled adversarial
  training.
\newblock In {\em International Conference on Learning Representations}.

\bibitem[Suggala et~al., 2018]{suggala2018adversarial}
Suggala, A.~S., Prasad, A., Nagarajan, V., and Ravikumar, P. (2018).
\newblock On adversarial risk and training.
\newblock {\em arXiv preprint arXiv:1806.02924}.

\bibitem[Szegedy et~al., 2013]{sze}
Szegedy, C., Zaremba, W., Sutskever, I., Bruna, J., Erhan, D., Goodfellow, I.,
  and Fergus, R. (2013).
\newblock Intriguing properties of neural networks.
\newblock {\em arXiv preprint arXiv:1312.6199}.

\bibitem[Talagrand, 2014]{tal}
Talagrand, M. (2014).
\newblock {\em Upper and lower bounds for stochastic processes: modern methods
  and classical problems}, volume~60.
\newblock Springer Science \& Business Media.

\bibitem[Uesato et~al., 2018]{uesato2018adversarial}
Uesato, J., O'Donoghue, B., Oord, A. v.~d., and Kohli, P. (2018).
\newblock Adversarial risk and the dangers of evaluating against weak attacks.
\newblock {\em arXiv preprint arXiv:1802.05666}.

\bibitem[Vapnik, 2013]{vapnik2013nature}
Vapnik, V. (2013).
\newblock {\em The nature of statistical learning theory}.
\newblock Springer science \& business media.

\bibitem[Vapnik, 1999]{vapnik1999overview}
Vapnik, V.~N. (1999).
\newblock An overview of statistical learning theory.
\newblock {\em IEEE transactions on neural networks}, 10(5):988--999.

\bibitem[Wang et~al., 2017]{wang2017analyzing}
Wang, Y., Jha, S., and Chaudhuri, K. (2017).
\newblock Analyzing the robustness of nearest neighbors to adversarial
  examples.
\newblock {\em arXiv preprint arXiv:1706.03922}.

\bibitem[Wong and Kolter, 2018]{wong2018provable}
Wong, E. and Kolter, Z. (2018).
\newblock Provable defenses against adversarial examples via the convex outer
  adversarial polytope.
\newblock In {\em International Conference on Machine Learning}, pages
  5283--5292.

\bibitem[Xu and Raginsky, 2017]{xu2017information}
Xu, A. and Raginsky, M. (2017).
\newblock Information-theoretic analysis of generalization capability of
  learning algorithms.
\newblock In {\em Advances in Neural Information Processing Systems}, pages
  2524--2533.

\bibitem[Xu and Mannor, 2012]{xu2012robustness}
Xu, H. and Mannor, S. (2012).
\newblock Robustness and generalization.
\newblock {\em Machine learning}, 86(3):391--423.

\bibitem[Yin et~al., 2018]{yin2018rademacher}
Yin, D., Ramchandran, K., and Bartlett, P. (2018).
\newblock Rademacher complexity for adversarially robust generalization.
\newblock {\em arXiv preprint arXiv:1810.11914}.

\bibitem[{Zhang} et~al., 2018]{2018arXiv180409060Z}
{Zhang}, J., {Liu}, T., and {Tao}, D. (2018).
\newblock {An Information-Theoretic View for Deep Learning}.
\newblock {\em ArXiv e-prints}.

\bibitem[Zhang, 2002]{zhang2002covering}
Zhang, T. (2002).
\newblock Covering number bounds of certain regularized linear function
  classes.
\newblock {\em Journal of Machine Learning Research}, 2(Mar):527--550.

\bibitem[Zhou, 2002]{zhou2002covering}
Zhou, D.-X. (2002).
\newblock The covering number in learning theory.
\newblock {\em Journal of Complexity}, 18(3):739--767.

\end{thebibliography}
\bibliographystyle{apalike}

%%%%%%%%%%%%%%%%%%%%%%%%%%%%%%%%%%%%%%%%%%%%%%%%%%%%%%%%%%%%%%%%%%%%%%%%%%%%%%%
%%%%%%%%%%%%%%%%%%%%%%%%%%%%%%%%%%%%%%%%%%%%%%%%%%%%%%%%%%%%%%%%%%%%%%%%%%%%%%%
% DELETE THIS PART. DO NOT PLACE CONTENT AFTER THE REFERENCES!
%%%%%%%%%%%%%%%%%%%%%%%%%%%%%%%%%%%%%%%%%%%%%%%%%%%%%%%%%%%%%%%%%%%%%%%%%%%%%%%
%%%%%%%%%%%%%%%%%%%%%%%%%%%%%%%%%%%%%%%%%%%%%%%%%%%%%%%%%%%%%%%%%%%%%%%%%%%%%%%
\newpage
\clearpage
\appendix
\section{Proof of Lemma 3}\label{app10}
\textit{Proof of Lemma 3.} We first prove $\implies$. For any $f\in\mathcal{F}$ and any empirical distribution $z_i\sim P_n$, by Assumption 3, there exists a constant $\lambda_{f,z_i}$ such that $f(z')-f(z_i)\leq\lambda_{f,z_i} d_{\mathcal{Z}}(z_i,z')$ for any $z'\in\mathcal{Z}$, which leads to $\sup_{z'\in \mathcal{Z}}\{f(z')-\lambda_{f,z_i} d_{\mathcal{Z}}(z_i,z')-f(z_i)\})=0$. Let $\lambda^*=\max_i\{\lambda_{z_i,f}\}$. Then, for any $z_i\sim P_n$, we have $\sup_{z'\in \mathcal{Z}}\{f(z')-\lambda^* d_{\mathcal{Z}}(z_i,z')-f(z_i)\})=0$. Therefore, $\psi_{f,P_n}(\lambda^*)=0$ and the set $\{\lambda: \psi_{f,P_n}(\lambda)=0\}$ is nonempty. $\Longleftarrow$ can be directly derived from the definition.
\section{Proof of Lemma 5}\label{app1}
\textit{Proof of Lemma 5.} 
Define the $\Phi$-indexed process $X=(X_\varphi)_{\varphi\in \Phi}$ by
$$X_\varphi:=\dfrac{1}{\sqrt{n}}\sum_{i=1}^n\sigma_i\varphi(z_i).$$
Note that $\mathbb{E}[X_\varphi]=0$ for all $\varphi\in \Phi$. First we show that X is a subgaussian process with respect to the pseudometric $d_\Phi(\varphi,\varphi')$, defined as
$$d_\Phi(\varphi, \varphi'):=||f-f'||_{\infty}+(diam(Z))|\lambda-\lambda'|$$
 for  $\varphi=\varphi_{\lambda,f}$ and $\varphi'=\varphi_{\lambda',f'}$.
From the definition of $\varphi_{\lambda,f}$, it is easy to show that $||\varphi-\varphi'||_\infty\leq d_\Phi(\varphi, \varphi')$. Then for any $t\in \mathbb{R}$, we can get
\begin{equation*}
\begin{array}{l}
\mathbb{E}\left[\exp(t(X_\varphi-X_\varphi'))\right]\\
=\mathbb{E}\left[\exp(\dfrac{t}{\sqrt{n}}\sum_{i=1}^n\sigma_i(\varphi(z_i)-\varphi'(z_i)))\right]\\
=\left(\mathbb{E}\left[\exp(\dfrac{t}{\sqrt{n}}\sigma_1(\varphi(z_1)-\varphi'(z_1))\right]\right)^n\\
\leq \exp\left(\dfrac{t^2d^2_\Phi(\varphi,\varphi')}{2}\right)
\end{array},
\end{equation*}
where the second equality is by the fact that $(\sigma_i, z_i)$ are i.i.d., and the final inequality uses Hoeffding's lemma. Therefore, X is subgaussian with respect to $d_{\Phi}$. And the expected Rademacher complexity $\mathfrak{R}_n(\Phi)$ can be bounded by the Dudley entropy integral \cite{tal}:
$$\mathfrak{R}_n(\Phi)\leq \dfrac{12}{\sqrt{n}}\int_0^\infty\sqrt{log \mathcal{N}(\Phi,d_\Phi,u)}du,$$
where $ \mathcal{N}(\Phi,d_\Phi,\cdot)$ represents the covering numbers of $(\Phi,d_\Phi)$. By the definition of $d_\Phi$, it follows that 
$$\mathcal{N}(\Phi,d_\Phi,u)\leq \mathcal{N}(\mathcal{F},||\cdot||_\infty,u/2)\cdot\mathcal{N}([a,b],|\cdot|,\frac{u}{2diam(Z)})$$
and therefore
\begin{equation*}
\begin{array}{rl}
\mathfrak{R}_n(\Phi)\leq& \dfrac{12}{\sqrt{n}}\left(\displaystyle\int_0^\infty\sqrt{log\mathcal{N}(\mathcal{F},||\cdot||_\infty,u/2)}du+\right.\\
&\left.\displaystyle\int_0^\infty\sqrt{log\mathcal{N}([a,b],|\cdot|,u/(2diam(Z)))}du\right)
\end{array}.
\end{equation*}
The second integral term could be easily obtained as follows
\begin{equation*}
\begin{array}{l}
\displaystyle\int_0^\infty\sqrt{log\mathcal{N}([a,b],|\cdot|,u/(2diam(Z)))}du\\
\leq (b-a)\cdot diam(z)\displaystyle\int_0^1\sqrt{log\dfrac{1}{u}}du\\
=\dfrac{\sqrt{\pi}}{2}(b-a)\cdot diam(z)
\end{array}.
\end{equation*}
Consequently,
$$
\begin{array}{rl}
\mathfrak{R}_n(\Phi)\leq& \dfrac{12}{\sqrt{n}}\displaystyle\int_0^\infty\sqrt{log\mathcal{N}(\mathcal{F},||\cdot||_\infty,u/2)}du+\\
&\dfrac{6\sqrt{\pi}}{\sqrt{n}}(b-a) \cdot diam(Z)
\end{array}.
$$
%\section{Proof of Proposition 2}\label{app2}
%\textit{Proof of Proposition 2.}
%First note that the left side $R_Q(f)\leq R_{\epsilon_\mathcal{B},1}(P,f)$ is obvious by the definition. For the second part, we need to use the following Kantorovich dual representation of $W_1(\cdot,\cdot)$ \cite{vil}:
%$$
%\begin{array}{rl}
%W_1(Q,Q')=&\sup\bigg\{|\mathbb{E}_QF-\mathbb{E}_{Q'}F|: \\
%&\ \ \ \ \ \left.\sup_{z,z'\in\mathcal{Z},z\neq z'}\dfrac{F(z)-F(z')}{d_\mathcal{Z}(z,z')}\leq 1\right\}
%\end{array}.$$
%For any $Q, Q'\in B^W_{\epsilon_\mathcal{B},1}(P)$, by the triangle inequality, we have $W_1(Q,Q')\leq 2\epsilon_\mathcal{B}$. Since $f$ is $L$-lipschitz, by Kantorovich dual representation, we get the following inequality
%$$\dfrac{1}{L}|\mathbb{E}_Qf-\mathbb{E}_{Q'}f|\leq W_1(Q,Q')\leq 2\epsilon_\mathcal{B}.$$
%Then by simply rearranging the terms, the above inequality could be expressed as
%$$R_Q(f)-R_{Q'}(f)\leq 2L\epsilon_\mathcal{B}.$$
%This inequality holds for any $Q$. Taking the supremum over $Q\in B^W_{\epsilon_\mathcal{B},1}(P)$ gives us
%$$R_{\epsilon_\mathcal{B},1}(P,f)\leq R_{Q'}(f)+2L\epsilon_\mathcal{B}.$$
\section{Proofs of Corollary 2 and Corollary 3}\label{app3}
\textit{Proof of Corollary 2.}
We first verify the assumption conditions in Theorem 1. Assumption 1 is evidently satisfied since $diam(\mathcal{Z})\leq (2r+1)$. For each $f\in\mathcal{F}$, assumption 2 holds with $M=1+\Lambda r$. To verify assumption 3, we can write
\begin{equation*}
\begin{array}{rl}
f(z')-f(z)&\leq \max\{0, yw\cdot x-y'w\cdot x'\}\\
&\leq \max\{0, 2yw\cdot x\mathbbm{1}_{(y\neq y')}+||w||_2||x'-x||_2\}\\
&\leq \max\{2yw\cdot x, ||w||_2\}d_{\mathcal{Z}}(z,z')
\end{array}.
\end{equation*}
So $\lambda_{f,P_n}^+\leq \displaystyle\max_{i}\{2y_iw\cdot x_i, ||w||_2\}$ and assumption 3 holds. 

To evaluate the Dudley entropy integral, we need to estimate the covering numbers $\mathcal{N}(\mathcal{F},||\cdot||_\infty,u/2)$. First observe, for any $f_1,f_2\in \mathcal{F}$, we have
$$
\begin{array}{l}
||f_1-f_2||_\infty=\sup_{x\in\mathcal{X},y\in\mathcal{Y}}|f_1(x,y)-f_2(x,y)|\\
\leq \sup_{x\in\mathcal{X},y\in\mathcal{Y}}|yw_1\cdot x-yw_2\cdot x|\leq ||w_1-w_2||_2r.
\end{array}
$$
Since $w_1, w_2$ belong to a $\Lambda$-ball in $\mathcal{R}^d$,
$$\mathcal{N}(\mathcal{F},||\cdot||_\infty,u/2)\leq \left(\dfrac{6\Lambda r}{u}\right)^d$$
for $0<u<2\Lambda r$, and $\mathcal{N}(\mathcal{F},||\cdot||_\infty,u/2)=1$ for $u\geq 2\Lambda r$, which gives
$$
\begin{array}{l}
\displaystyle\int_0^\infty\sqrt{log\mathcal{N}(\mathcal{F},||\cdot||_\infty,u/2)}du\\
\leq \displaystyle\int_0^{2\Lambda r}\sqrt{d\log(\dfrac{6\Lambda r}{u})}du\leq 6\Lambda r\sqrt{d}
\end{array}.$$
Substituting this into expression (\ref{eq2}), we get the desired result
\begin{equation*}
\begin{array}{l}
R_{P}(f, \mathcal{B})
\leq \dfrac{1}{n}\sum_{i=1}^nf(z_i)+\lambda_{f,P_n}^+\epsilon_\mathcal{B}+\dfrac{144}{\sqrt{n}}\Lambda r\sqrt{d}+\\
\ \ \ \ \ \ \ \ \ \ \ \ \ \ \ \ \ \ \ \ \dfrac{12\sqrt{\pi}}{\sqrt{n}}\Lambda_{\epsilon_\mathcal{B}}\cdot (2r+1)+(1+\Lambda r)\sqrt{\dfrac{log(\frac{1}{\delta})}{2n}}
\end{array}.
\end{equation*}
To prove Corollary 3, we need the following Proposition \cite{cuc}.
\begin{proposition}
For compact $\mathcal{X}\subseteq \mathbb{R}^d$,
$$
\begin{array}{l}
\log \mathcal{N}(I_K(H),||\cdot||_{\mathcal{X}},u)\leq dC_2\left(\log \dfrac{\Lambda}{u}\right)^{d+1}
\end{array}
$$
holds for all $0<u\leq \Lambda/2$ where $C_2=(32+\dfrac{640d(diam(\mathcal{X}))^2}{\sigma^2})^{d+1}$.
\end{proposition}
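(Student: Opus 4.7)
The result is a covering number estimate for the image of a Hilbert ball in the Gaussian RKHS under the inclusion $I_K:\mathbb{H}\to\mathcal{C}(\mathcal{X})$. My plan is to exploit the extreme smoothness of Gaussian RKHS functions and convert a sup norm covering problem into a finite-dimensional one by Taylor approximation. First I would show that elements of $H$ have uniformly controlled partial derivatives. Using the reproducing property, for any $h\in H$ and any multi-index $\alpha$ one has
\begin{equation*}
|\partial^\alpha h(x)| \;=\; |\langle h, \partial^\alpha_1 K(x,\cdot)\rangle_{\mathbb{H}}| \;\leq\; \Lambda \sqrt{\partial^\alpha_1\partial^\alpha_2 K(x,x)}.
\end{equation*}
For $K(x,y)=\exp(-\|x-y\|_2^2/\sigma^2)$ the mixed derivatives at the diagonal are polynomials in Hermite-type coefficients, and a direct computation yields a bound of the form $|\partial^\alpha h(x)|\leq \Lambda\,(C/\sigma)^{|\alpha|}\sqrt{\alpha!}$, with $C$ depending on $d$.

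Next I would fix an integer radius $r$ (to be optimized) and cover $\mathcal{X}$ by a grid of points $\{x_j\}$ of spacing proportional to $\mathrm{diam}(\mathcal{X})/r$. At each grid point, approximate $h$ by its Taylor polynomial of total degree $r$; the Taylor remainder on each grid cell is controlled by the derivative bound above and behaves like $(\mathrm{diam}(\mathcal{X})/(\sigma r))^r\cdot \Lambda$, which can be driven below $u/2$ by taking $r$ of order $\log(\Lambda/u)$ (with the proportionality constant depending on $\mathrm{diam}(\mathcal{X})/\sigma$). This is where the shape $32+640 d\,\mathrm{diam}(\mathcal{X})^2/\sigma^2$ appears in $C_2$: it is exactly the constant needed so that $r\sim\log(\Lambda/u)$ kills the Taylor remainder uniformly on $\mathcal{X}$.

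Once Taylor truncation reduces the problem to approximating polynomials of degree $\leq r$ in $d$ variables, I would cover the coefficient vectors by an $\ell_\infty$ grid. The number of monomials is $\binom{r+d}{d}=O(r^d)$, and each coefficient is bounded (in absolute value) by the derivative bound. An $\varepsilon$-grid in each of these $O(r^d)$ coordinates has cardinality $(C\Lambda/u)^{O(r^d)}$, so
\begin{equation*}
\log \mathcal{N}(I_K(H),\|\cdot\|_{\mathcal{X}},u) \;\lesssim\; r^d \cdot \log(\Lambda/u) \;\lesssim\; (\log(\Lambda/u))^{d+1},
\end{equation*}
which gives exactly the claimed exponent $d+1$. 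The factor $dC_2$ absorbs the various constants from the Taylor remainder estimate and the coefficient bounds.

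The main obstacle is \emph{bookkeeping of constants}: one needs to track simultaneously the size of derivatives of $K$ (which determines how many Taylor terms are required), the number of grid points in $\mathcal{X}$ (which determines the number of Taylor expansions), and the coefficient coverings, and then verify that the exponent of $\log(\Lambda/u)$ is exactly $d+1$ and that the multiplicative constant collapses to the stated $dC_2$ with $C_2=(32+640d\,\mathrm{diam}(\mathcal{X})^2/\sigma^2)^{d+1}$. The range condition $0<u\leq \Lambda/2$ is needed to ensure $\log(\Lambda/u)\geq \log 2>0$, so that the choice $r\sim\log(\Lambda/u)$ produces a positive integer and the various absorptions into $C_2$ are valid.
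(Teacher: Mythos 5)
First, a point of comparison: the paper does not prove this proposition at all --- it is imported verbatim from the cited reference \citep{cuc} as a known covering-number estimate for the Gaussian RKHS, so there is no in-paper argument to measure your plan against. Judged on its own, your route (bound $|\partial^\alpha h(x)|$ via the reproducing property and the mixed derivatives of $K$ on the diagonal, truncate a Taylor expansion at degree $r\sim\log(\Lambda/u)$, then discretize the polynomial coefficients) is the standard mechanism behind results of this type, and the derivative bound $|\partial^\alpha h(x)|\le \Lambda(\sqrt{2}/\sigma)^{|\alpha|}\sqrt{\alpha!}$ you invoke is correct for the Gaussian kernel. So the skeleton is sound and in the spirit of the literature proof.

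There are, however, two concrete problems. The first is a counting inconsistency in your own sketch: you introduce a spatial grid of spacing proportional to $\mathrm{diam}(\mathcal{X})/r$, hence of cardinality of order $r^d$, and expand to degree $r$ at each center, so the number of parameters you must discretize is of order $r^d\cdot r^d=r^{2d}$, not $r^d$; carried through, this gives $\log\mathcal{N}\lesssim (\log(\Lambda/u))^{2d+1}$, not the claimed exponent $d+1$. Your final display silently drops the factor counting the grid cells. The repair is to use a single Taylor expansion (or a number of centers independent of $u$) over all of $\mathcal{X}$ and to control the remainder not through the cell size but through the $\sqrt{\alpha!}$ in the denominator of the Taylor coefficients, i.e. the remainder is of order $\Lambda(\sqrt{2}\,\mathrm{diam}(\mathcal{X})/\sigma)^{r+1}\sum_{|\alpha|=r+1}1/\sqrt{\alpha!}$, which still decays super-exponentially in $r$, so $r\sim\log(\Lambda/u)$ suffices and the parameter count stays at $\binom{r+d}{d}\lesssim r^d$, giving the exponent $d+1$. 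The second problem is that the statement being proved is not just a shape-of-bound claim: it asserts the specific constant $C_2=(32+640\,d\,\mathrm{diam}(\mathcal{X})^2/\sigma^2)^{d+1}$ and the prefactor $dC_2$ on the whole range $0<u\le\Lambda/2$. Your plan explicitly defers this to ``bookkeeping,'' but that bookkeeping is the actual content of the cited result; as written you would obtain $\log\mathcal{N}\le C(d,\mathrm{diam}(\mathcal{X})/\sigma)(\log(\Lambda/u))^{d+1}$ with an unspecified constant, which is weaker than the proposition as stated. If your goal is only to justify Corollary 3, an unspecified constant of this form would in fact suffice, but then the statement should be weakened accordingly rather than claimed with the exact $C_2$.
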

\textit{Proof of Corollary 3.} 
First note that $||\tau(x')-\tau(x)||_{\mathbb{H}}^2=2-2K(x,x')\leq 2$. So assumption 1 holds with $diam(\mathcal{Z})\leq \dfrac{5}{2}$. For any $f\in \mathcal{F}$, assumption 2 holds with $M=1+\Lambda$. For assumption 3, we have
\begin{equation*}
\begin{array}{l}
f(z')-f(z)\\
\leq \max\{0, y\langle w, \tau(x)\rangle-y'\langle w, \tau(x')\rangle\}\\
\leq \max\{0, 2y\langle w, \tau(x)\rangle\mathbbm{1}_{(y\neq y')}+|\langle w, \tau(x)-\tau(x')\rangle\}\\
\leq \max\{2y\langle w, \tau(x)\rangle, ||w||_{\mathbb{H}}\}d_{\mathcal{Z}}(z,z')
\end{array}.
\end{equation*}
So $\lambda_{f,P_n}^+\leq \displaystyle\max_{i}\{2y_i\langle w, \tau(x_i)\rangle, ||w||_{\mathbb{H}}\}$.

Now we calculate the covering numbers $\mathcal{N}(\mathcal{F},||\cdot||_\infty,u/2)$. Observe that 
$$
\begin{array}{l}
||f_1-f_2||_\infty=\sup_{x\in\mathcal{X},y\in\mathcal{Y}}|f_1(x,y)-f_2(x,y)|\\
\leq \sup_{x\in\mathcal{X}}|\langle w_1, x\rangle-\langle w_2,x\rangle|=||w_1-w_2||_{\mathcal{X}}
\end{array}.$$
So 
\begin{equation*}
\begin{array}{l}
\displaystyle\int_0^\infty\sqrt{log\mathcal{N}(\mathcal{F},||\cdot||_\infty,u/2)}du\\
\leq \displaystyle\int_0^{\infty}\sqrt{\log(\mathcal{N}(I_K(H),||\cdot||_{\mathcal{X}},\dfrac{u}{2})}du\\
=\displaystyle\int_0^{2\Lambda}\sqrt{\log(\mathcal{N}(I_K(H),||\cdot||_{\mathcal{X}},\dfrac{u}{2})}du\\
\leq \displaystyle\int_0^{\Lambda}\sqrt{\log(\mathcal{N}(I_K(H),||\cdot||_{\mathcal{X}},\dfrac{u}{2})}du+\\
\ \ \ \ \displaystyle\int_{\Lambda}^{2\Lambda}\sqrt{\log(\mathcal{N}(I_K(H),||\cdot||_{\mathcal{X}},\dfrac{\Lambda}{2})}du\\
\end{array},
\end{equation*}
where the equality uses $||w||_{\mathcal{X}}\leq \Lambda$ and the final inequality uses the monotonicity of covering numbers. Using the bound from Proposition 2, 
\begin{equation*}
\begin{array}{l}
\displaystyle\int_0^{\Lambda}\sqrt{\log(\mathcal{N}(I_K(H),||\cdot||_{\mathcal{X}},\dfrac{u}{2})}du\\
\leq \sqrt{d}\left(32+\dfrac{640d(diam(\mathcal{X}))^2}{\sigma^2}\right)^{\frac{d+1}{2}}\displaystyle\int_0^{\Lambda}(\log \dfrac{2\Lambda}{u})^{\frac{d+1}{2}}du\\
=2\Lambda \sqrt{d}\left(32+\dfrac{640d(diam(\mathcal{X}))^2}{\sigma^2}\right)^{\frac{d+1}{2}}\Gamma(\frac{d+3}{2},\log 2)\\
\end{array}.
\end{equation*}
And
$$
\begin{array}{l}
\displaystyle\int_{\Lambda}^{2\Lambda}\sqrt{\log(\mathcal{N}(I_K(H),||\cdot||_{\mathcal{X}},\dfrac{\Lambda}{2})}du\\
=\Lambda \sqrt{d}\left(32+\dfrac{640d(diam(\mathcal{X}))^2}{\sigma^2}\right)^{\frac{d+1}{2}}(\log 2)^{\frac{d+1}{2}}.
\end{array}$$
Therefore, applying Corollary 1, we get
$$
\begin{array}{ll}
R_{P}(f, \mathcal{B})\leq& \dfrac{1}{n}\sum_{i=1}^nf(z_i)+\lambda_{f,P_n}^+\epsilon_\mathcal{B}+\dfrac{24}{\sqrt{n}}\Lambda\sqrt{d}C_3+\\
&\dfrac{30\sqrt{\pi}}{\sqrt{n}}\Lambda_{\epsilon_\mathcal{B}}+(1+\Lambda)\sqrt{\dfrac{log(1/\delta)}{2n}}
\end{array}.
$$

\section{Proof of Corollary 4}\label{app4}
The goal of this section is to prove the adversarial expected risk for neural networks. To this end, it is necessary to first establish some properties of the margin operator $\mathcal{M}(v,y)=v_y-\max_{j\neq y}v_j$ and the ramp loss $l_\gamma$.
\begin{lemma}
For every $j$, $\mathcal{M}(\cdot, j)$ is 2-Lipschitz with respect to $||\cdot||_2$.
\end{lemma}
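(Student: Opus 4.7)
The plan is to bound $|\mathcal{M}(v,j) - \mathcal{M}(v',j)|$ for arbitrary $v, v' \in \mathbb{R}^k$ by splitting along the definition $\mathcal{M}(v,j) = v_j - \max_{i\neq j} v_i$ and applying the triangle inequality. This reduces the problem to controlling two differences: $|v_j - v'_j|$ and $|\max_{i\neq j} v_i - \max_{i\neq j} v'_i|$. The first is immediately bounded by $\|v - v'\|_2$ since $|v_j - v'_j| \le \|v-v'\|_\infty \le \|v-v'\|_2$.

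For the second term, I would use the standard fact that the maximum is $1$-Lipschitz with respect to the sup-norm: for any finite collection of reals, $\max_i v_i - \max_i v'_i \le \max_i (v_i - v'_i) \le \|v - v'\|_\infty$, and by symmetry one gets $|\max_i v_i - \max_i v'_i| \le \|v - v'\|_\infty$. Since $\|v - v'\|_\infty \le \|v-v'\|_2$, both terms are controlled by $\|v-v'\|_2$, and summing yields the factor of $2$.

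Adding the two bounds gives $|\mathcal{M}(v,j) - \mathcal{M}(v',j)| \le 2\|v - v'\|_2$, which is exactly the claimed Lipschitz property. There is no real obstacle here; the only subtle point is recording the $1$-Lipschitz property of $\max$ under the sup-norm, which is a one-line consequence of $\max_i v_i \le \max_i v'_i + \max_i |v_i - v'_i|$. No additional tools beyond the triangle inequality and the norm comparison $\|\cdot\|_\infty \le \|\cdot\|_2$ are required.
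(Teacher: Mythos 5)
Your proof is correct and takes essentially the same route as the paper: both reduce the claim to bounding the difference of the maxima (which the paper handles by choosing the maximizing index and your proof handles by invoking the $1$-Lipschitz property of $\max$ under the sup-norm), then combine with the $|v_j - v'_j|$ term and pass from $\|\cdot\|_\infty$ to $\|\cdot\|_2$. The only stylistic difference is that you split the two pieces by the triangle inequality up front while the paper chains them in a single WLOG-directed inequality; the underlying argument is identical.
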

\textit{proof.}
Let $u, v$ and $y$ be given. If $\mathcal{M}(u,y)\geq\mathcal{M}(v,y)$, denote the index $j$ which satisfies that $\mathcal{M}(v,y)=v_y-v_j$. Then,
$$
\begin{array}{l}
\mathcal{M}(u,y)-\mathcal{M}(v,y)=u_y-\max_{i\neq y}u_i-v_y+v_j\\
\leq u_y-u_j-v_y+v_j\leq 2||u-v||_\infty\leq 2||u-v||_2.
\end{array}$$
Otherwise, let $j$ be the index satisfying $\mathcal{M}(u, y)=u_y-u_j$, and we obtain
$$-2||u-v||_2\leq \mathcal{M}(u,y)-\mathcal{M}(v,y).$$
Therefore, $\mathcal{M}(\cdot, j)$ is $2$-Lipschitz with respect to $||\cdot||_2$.
\begin{lemma}
For any $f\in\mathcal{F}$, we have $\lambda_{f,P_n}^+\leq C_4$ where $C_4:=\max_{j}\{\dfrac{2}{\gamma}\prod_{i=1}^L\rho_i||A_i||_{\sigma}, \dfrac{1}{\gamma}\big(\mathcal{M}(\mathcal{H}_{\mathcal{A}}(x_j),y_j)+\max{\mathcal{H}_{\mathcal{A}}(x_j)}-\min{\mathcal{H}_{\mathcal{A}}(x_j)}\big)\}$.
\end{lemma}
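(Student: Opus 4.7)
\noindent
\textit{Proof proposal.} The plan is to invoke Lemma~3 pointwise on the sample. Taking $z'=z$ in the definition of $\psi_{f,P_n}$ shows that $\sup_{z'\in\mathcal{Z}}\{f(z')-\lambda d_{\mathcal{Z}}(z,z')-f(z)\}\geq 0$, so $\psi_{f,P_n}(\lambda)=0$ is equivalent to the $n$ pointwise inequalities $f(z')-f(z_j)\leq \lambda\, d_{\mathcal{Z}}(z_j,z')$ holding for every $j\in\{1,\dots,n\}$ and every $z'\in\mathcal{Z}$. Hence it suffices to produce, for each sample point $z_j=(x_j,y_j)$, a constant $\lambda_j$ making the inequality hold for all $z'$; then $\lambda_{f,P_n}^+ \leq \max_j \lambda_j$, and I will check that this $\max$ is at most $C_4$.

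My first step will be to decompose, for an arbitrary $z'=(x',y')$,
$$ f(z')-f(z_j) \;=\; \bigl[f(x',y')-f(x_j,y')\bigr] \;+\; \bigl[f(x_j,y')-f(x_j,y_j)\bigr]. $$
The first bracket is a pure feature perturbation at fixed label $y'$, and the second bracket is a pure label flip that vanishes whenever $y'=y_j$. This separation mirrors the two terms in $d_{\mathcal{Z}}(z_j,z')=\|x'-x_j\|_2+\mathbbm{1}_{y'\neq y_j}$, so I aim to bound the first bracket by a multiple of $\|x'-x_j\|_2$ and the second by a multiple of $\mathbbm{1}_{y'\neq y_j}$.

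For the first bracket I will chain three Lipschitz facts: the ramp loss $l_\gamma$ is $1/\gamma$-Lipschitz directly from its piecewise-linear form, Lemma~6 gives that $\mathcal{M}(\cdot,y')$ is $2$-Lipschitz in $\|\cdot\|_2$, and the network map $\mathcal{H}_\mathcal{A}$ is $\prod_{i=1}^L \rho_i\|A_i\|_\sigma$-Lipschitz by the standard depth induction (each $\sigma_i$ is $\rho_i$-Lipschitz with $\sigma_i(0)=0$, and each linear layer contracts by its spectral norm). Composing yields the bound $\tfrac{2}{\gamma}\prod_{i=1}^L \rho_i\|A_i\|_\sigma\cdot\|x'-x_j\|_2$, which is the first argument of the $\max$ defining $C_4$. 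For the second bracket I will combine monotonicity and $1/\gamma$-Lipschitzness of $l_\gamma$ to obtain
$$ f(x_j,y')-f(x_j,y_j)\;\leq\; \tfrac{1}{\gamma}\bigl(\mathcal{M}(\mathcal{H}_\mathcal{A}(x_j),y_j)-\mathcal{M}(\mathcal{H}_\mathcal{A}(x_j),y')\bigr)^{+}, $$
and then use the key observation that for any $v\in\mathbb{R}^k$ and any label $y'$, $\mathcal{M}(v,y')\geq \min_k v_k-\max_k v_k$ (since $v_{y'}\geq \min_k v_k$ and $\max_{k\neq y'} v_k \leq \max_k v_k$). Substituting $v=\mathcal{H}_\mathcal{A}(x_j)$ replaces $-\mathcal{M}(v,y')$ by at most $\max\mathcal{H}_\mathcal{A}(x_j)-\min\mathcal{H}_\mathcal{A}(x_j)$, yielding the second argument of the $\max$ defining $C_4$.

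Taking $\lambda_j$ to be the maximum of the two per-sample constants just derived, the decomposition gives $f(z')-f(z_j)\leq \lambda_j\, d_{\mathcal{Z}}(z_j,z')$ for every $z'$, so by the reduction in the first paragraph $\lambda_{f,P_n}^+ \leq \max_j \lambda_j = C_4$. I expect the main obstacle to be the sharp control of the label-flip term via $\max-\min$ of the network output: the naive bound $|\mathcal{M}(v,y_j)-\mathcal{M}(v,y')|$ is too crude, and one really needs to exploit monotonicity of $l_\gamma$ to pass to the positive part, then leverage the one-sided margin inequality. Once that observation is in place, everything else reduces to routine Lipschitz composition.
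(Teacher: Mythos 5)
Your proof is correct and takes essentially the same route as the paper: it telescopes through the mixed point $(x_j,y')$ to separate the feature perturbation from the label flip, bounds the former by Lipschitz composition and the latter by the $\max\mathcal{H}_\mathcal{A}(x_j)-\min\mathcal{H}_\mathcal{A}(x_j)$ observation, and then passes to $\lambda_{f,P_n}^+\leq C_4$ via the characterization of $\psi_{f,P_n}(\lambda)=0$ as $n$ pointwise inequalities at the sample points. The paper performs the identical telescope at the level of the margin operator $\mathcal{M}$ after a single application of the monotone Lipschitz bound on $l_\gamma$, rather than at the level of $f$ as you do, but this is a cosmetic reordering of the same two ingredients.
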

\begin{proof}
By the definition of $f$, for any $z$ and $z'$, we have
\begin{equation*}
\begin{array}{l}
f(z')-f(z)\\
=l_{\gamma}(-\mathcal{M}(\mathcal{H}_{\mathcal{A}}(x'),y'))-l_{\gamma}(-\mathcal{M}(\mathcal{H}_{\mathcal{A}}(x),y))\\
\leq \max\{0,\dfrac{1}{\gamma}(\mathcal{M}(\mathcal{H}_{\mathcal{A}}(x),y)-\mathcal{M}(\mathcal{H}_{\mathcal{A}}(x'),y'))\}\\
\leq \max\{ 0,\dfrac{1}{\gamma}(\mathcal{M}(\mathcal{H}_{\mathcal{A}}(x),y')-\mathcal{M}(\mathcal{H}_{\mathcal{A}}(x'),y'))+\\
\ \ \ \  \dfrac{1}{\gamma}(\mathcal{M}(\mathcal{H}_{\mathcal{A}}(x),y)-\mathcal{M}(\mathcal{H}_{\mathcal{A}}(x),y'))\}\\
\leq \max\{0,\dfrac{2}{\gamma}|\mathcal{H}_{\mathcal{A}}(x)-\mathcal{H}_{\mathcal{A}}(x')|+ \dfrac{1}{\gamma}(\mathcal{M}(\mathcal{H}_{\mathcal{A}}(x),y)-\\
\ \ \ \ \mathcal{M}(\mathcal{H}_{\mathcal{A}}(x),y')) \}\\
\leq \dfrac{2}{\gamma}\prod_{i=1}^L\rho_i||A_i||_{\sigma}||x-x'||_2+ \dfrac{1}{\gamma}\big(\mathcal{M}(\mathcal{H}_{\mathcal{A}}(x),y)+\\
\ \ \ \ \ \max{\mathcal{H}_{\mathcal{A}}(x)}-\min{\mathcal{H}_{\mathcal{A}}(x)}\big)\mathbbm{1}_{y\neq y'}\\
\leq C_4d_{\mathcal{Z}}(z,z')\\
\end{array}.
\end{equation*}
%\begin{equation*}
%\begin{array}{l}
%|f(z)-f(z')|\\
%=|l_{\gamma}(-\mathcal{M}(\mathcal{H}_{\mathcal{A}}(x),y))-l_{\gamma}(-\mathcal{M}(\mathcal{H}_{\mathcal{A}}(x'),y'))|\\
%\leq \dfrac{1}{\gamma}|\mathcal{M}(\mathcal{H}_{\mathcal{A}}(x),y)-\mathcal{M}(\mathcal{H}_{\mathcal{A}}(x'),y')|\\
%\leq \dfrac{1}{\gamma}|\mathcal{M}(\mathcal{H}_{\mathcal{A}}(x),y)-\mathcal{M}(\mathcal{H}_{\mathcal{A}}(x'),y)|+\\
%\ \ \ \  \dfrac{1}{\gamma}|\mathcal{M}(\mathcal{H}_{\mathcal{A}}(x'),y)-\mathcal{M}(\mathcal{H}_{\mathcal{A}}(x'),y')|\\
%\leq \dfrac{2}{\gamma}|\mathcal{H}_{\mathcal{A}}(x)-\mathcal{H}_{\mathcal{A}}(x')|+ \dfrac{1}{\gamma}|\mathcal{M}(\mathcal{H}_{\mathcal{A}}(x'),y)-\\
%\ \ \ \ \mathcal{M}(\mathcal{H}_{\mathcal{A}}(x'),y')| \\
%\leq \dfrac{2}{\gamma}\prod_{i=1}^L\rho_is_i||x-x'||_2+ \dfrac{4}{\gamma}\prod_{i=1}^L\rho_is_iB\mathbbm{1}_{y\neq y'}\\
%\leq \dfrac{2}{\gamma}\prod_{i=1}^L\rho_is_i\max\{1,2B\}d_{\mathcal{Z}}(z,z')=Ld_{\mathcal{Z}}(z,z')\\
%\end{array}.
%\end{equation*}
where the third inequality uses Lemma C.1. Therefore, $\lambda_{f,P_n}^+\leq C_4$.
\end{proof}

\begin{lemma}
For any two feedforward neural network $\mathcal{H}_{\mathcal{A}}$ and $\mathcal{H}_{\mathcal{A'}}$ where $\mathcal{A}=(A_1, A_2,\cdots, A_L)$ and  $\mathcal{A'}=(A_1', A_2',\cdots, A_L')$, we have the following
$$||\mathcal{H}_{\mathcal{A}}(x)-\mathcal{H}_{\mathcal{A'}}(x)||_2\leq \prod_{i=1}^L\rho_is_iB\left(\sum_{j=1}^L\dfrac{||A_i-A_i'||_\sigma}{s_i}\right).$$
\end{lemma}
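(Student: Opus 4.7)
The plan is to prove this by a standard hybrid/telescoping argument, swapping one weight matrix at a time and bounding each resulting difference by the Lipschitz constants of the untouched head and tail of the network.

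First, I would introduce, for each $j \in \{0,1,\dots,L\}$, an interpolating network $g_j$ whose first $j$ layers use the weights $A_1',\dots,A_j'$ and whose remaining layers use $A_{j+1},\dots,A_L$, so that $g_0(x)=\mathcal{H}_{\mathcal{A}}(x)$ and $g_L(x)=\mathcal{H}_{\mathcal{A}'}(x)$. Writing
\begin{equation*}
\mathcal{H}_{\mathcal{A}}(x)-\mathcal{H}_{\mathcal{A}'}(x)=\sum_{j=1}^{L}\bigl(g_{j-1}(x)-g_{j}(x)\bigr),
\end{equation*}
I reduce the problem to bounding each summand. The key observation is that $g_{j-1}$ and $g_j$ agree on layers $1,\dots,j-1$ (both using $\mathcal{A}'$) and on layers $j+1,\dots,L$ (both using $\mathcal{A}$); they differ only in the matrix applied at layer $j$.

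Next, I would denote the intermediate activation after the first $j-1$ layers under $\mathcal{A}'$ by $u_{j-1}(x)$ and the composition of layers $j+1,\dots,L$ of $\mathcal{A}$ by $T_{j+1}^{L}$. Using $\sigma_i(0)=0$, $\rho_i$-Lipschitzness and $\|A_i'\|_\sigma\le s_i$, a straightforward induction gives $\|u_{j-1}(x)\|_2 \le B\prod_{i=1}^{j-1}\rho_i s_i$. Similarly, $T_{j+1}^{L}$ is $\prod_{i=j+1}^{L}\rho_i s_i$-Lipschitz. Since the only difference inside $g_{j-1}-g_j$ is $\sigma_j(A_j u_{j-1}(x))$ versus $\sigma_j(A_j' u_{j-1}(x))$, combining these three estimates yields
\begin{equation*}
\|g_{j-1}(x)-g_j(x)\|_2 \le \Bigl(\prod_{i=j+1}^{L}\rho_i s_i\Bigr)\cdot \rho_j\|A_j-A_j'\|_\sigma \cdot B\prod_{i=1}^{j-1}\rho_i s_i = B\,\frac{\|A_j-A_j'\|_\sigma}{s_j}\prod_{i=1}^{L}\rho_i s_i .
\end{equation*}

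Finally, summing over $j=1,\dots,L$ and applying the triangle inequality gives exactly the claimed bound. I do not anticipate a real obstacle here: the only care required is to make sure both $\mathcal{A}$ and $\mathcal{A}'$ satisfy $\|A_i\|_\sigma,\|A_i'\|_\sigma\le s_i$ (which holds since both lie in the hypothesis class $\mathcal{F}$), so that the head bound on $\|u_{j-1}(x)\|_2$ and the tail Lipschitz constant of $T_{j+1}^L$ are both controlled by the $s_i$'s. The rest is just bookkeeping of indices in the telescoping sum.
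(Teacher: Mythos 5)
Your proof is correct and is essentially the paper's argument: the paper proves the same bound by induction on the layer index, and unrolling that recursion (swap one matrix at a time, bound the intermediate activation by $B\prod\rho_i s_i$ and the remaining layers by their Lipschitz constants $\rho_i s_i$) yields exactly your telescoping sum, with only the cosmetic difference of which network supplies the head norm bound and which the tail Lipschitz bound. Both, like you, implicitly need $\|A_i\|_\sigma,\|A_i'\|_\sigma\le s_i$ from the hypothesis class.
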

\begin{proof}
We proof this by induction. Let $\Delta_i=||\mathcal{H}^i_{\mathcal{A}}(x)-\mathcal{H}^i_{\mathcal{A'}}(x)||_2$. First observe
$$
\begin{array}{l}
\Delta_1=||\sigma_1(A_1x)-\sigma_1(A_1'x)||_2\leq \rho_1||A_1x-A_1'x||_2\\
\leq \rho_1||A_1-A_1'||_\sigma||x||_2\leq \rho_1B||A_1-A_1'||_\sigma.
\end{array}$$
For any $i\geq 1$, we have the following
\begin{equation*}
\begin{array}{l}
\Delta_{i+1}\\
=||\sigma_{i+1}(A_{i+1}\sigma_{i}(A_{i}\cdots\sigma_2(A_2\sigma_1(A_1x))))-\\
\ \ \ \ \ \ \sigma_{i+1}(A_{i+1}'\sigma_{i}(A_{i}'\cdots\sigma_2(A_2'\sigma_1(A_1'x))))||_2 \\
\leq ||\sigma_{i+1}(A_{i+1}\sigma_{i}(A_{i}\cdots\sigma_2(A_2\sigma_1(A_1x))))-\\
\ \ \ \ \ \ \sigma_{i+1}(A_{i+1}'\sigma_{i}(A_{i}\cdots\sigma_2(A_2\sigma_1(A_1x))))||_2+\\
\ \ \ \ \ \ ||\sigma_{i+1}(A_{i+1}'\sigma_{i}(A_{i}\cdots\sigma_2(A_2\sigma_1(A_1x)))-\\
\ \ \ \ \ \ \sigma_{i+1}(A_{i+1}'\sigma_{i}(A_{i}'\cdots\sigma_2(A_2'\sigma_1(A_1'x))))||_2\\
\leq \rho_{i+1}||A_{i+1}-A_{i+1}'||_\sigma ||\sigma_{i}(A_{i}\cdots\sigma_2(A_2\sigma_1(A_1x)))||_2+\\
\ \ \ \ \ \ \rho_{i+1}s_{i+1}\Delta_{i}\\
\leq \rho_{i+1}||A_{i+1}-A_{i+1}'||_\sigma \prod_{j=1}^i \rho_js_jB+\rho_{i+1}s_{i+1}\Delta_{i}\\
\end{array}.
\end{equation*}
Therefore, using the induction step, we get the following 
\begin{equation*}
\begin{array}{l}
\Delta_{i+1}\\
\leq \rho_{i+1}||A_{i+1}-A_{i+1}'||_\sigma \prod_{j=1}^i \rho_js_jB+\rho_{i+1}s_{i+1}\Delta_{i}\\
\leq \rho_{i+1}||A_{i+1}-A_{i+1}'||_\sigma \prod_{j=1}^i \rho_js_jB+\\
\ \ \ \ \ \ \prod_{j=1}^{i+1}\rho_js_jB\left(\sum_{k=1}^i\dfrac{||A_k-A_k'||_\sigma}{s_k}\right)\\
=\prod_{j=1}^{i+1}\rho_js_jB\left(\sum_{k=1}^{i+1}\dfrac{||A_k-A_k'||_\sigma}{s_k}\right)
\end{array}.
\end{equation*}
\end{proof}
We now return to the proof of Corollary 4.

\noindent
\textit{Proof of Corollary 4.} First we verify the three assumptions. Assumption 1 holds with $diam(\mathcal{Z})\leq 2r+1$. Assumption 2 is self-satisfied by the definition of ramp loss with $0\leq f(z)\leq 1$. By Lemma C.2, $\lambda_{f,P_n}^+\leq C_4$. Now we proceed to upper bound the covering number for $\mathcal{F}$. For any $f$ and $f'$,
\begin{equation*}
\begin{array}{l}
||f-f'||_\infty\\=\sup_{z}|f(z)-f'(z)|\\
=\sup_{z}|l_{\gamma}(-\mathcal{M}(\mathcal{H}_{\mathcal{A}}(x),y))-l_{\gamma}(-\mathcal{M}(\mathcal{H}_{\mathcal{A'}}(x),y))| \\
\leq \sup_{x}\dfrac{2}{\gamma}||\mathcal{H}_{\mathcal{A}}(x)-\mathcal{H}_{\mathcal{A'}}(x)||_2\\
\leq \dfrac{2}{\gamma}\prod_{i=1}^L\rho_is_iB\left(\sum_{j=1}^L\dfrac{||A_j-A_j'||_\sigma}{s_i}\right)
\end{array},
\end{equation*}
where the last inequality applies lemma C.3. Since for any matrix $A$, we have $||A||_\sigma\leq ||A||_F$. The above inequality can be written as
$$||f-f'||_\infty\leq \dfrac{2}{\gamma}\prod_{i=1}^L\rho_is_iB\left(\sum_{j=1}^L\dfrac{||A_j-A_j'||_F}{s_i}\right).$$
Define $u_j, a_j$ and $\bar{a}$ as
$$u_j=\dfrac{s_jua_j}{\frac{4}{\gamma}\prod_{i=1}^L\rho_is_iB}, a_j=\dfrac{1}{\bar{a}}\left(\dfrac{b_j}{s_j}\right)^{1/2}, \bar{a}=\sum_{j=1}^L\left(\dfrac{b_j}{s_j}\right)^{1/2}.$$
So, 
$$ \dfrac{2}{\gamma}\prod_{i=1}^L\rho_is_iB\left(\sum_{j=1}^L\dfrac{u_j}{s_j}\right)=\dfrac{u}{2}.$$
Then, the covering number $\mathcal{N}(\mathcal{F},||\cdot||_\infty,u/2)$ can be bounded by
\begin{equation*}
\begin{array}{l}
\displaystyle\int_0^\infty\sqrt{\log\mathcal{N}(\mathcal{F},||\cdot||_\infty,u/2)}du\\
\displaystyle\leq \int_0^\infty\sqrt{\sum_{i=1}^L \log\mathcal{N}(A_i,||\cdot||_F,u_i)}du\\
=\displaystyle\int_0^\infty\sqrt{\sum_{i=1}^L \log\mathcal{N}(\{A_i: ||A_i||_{\sigma}\leq s_i, ||A_i||_F\leq b_i\},||\cdot||_F,u_i)}du\\
\leq \displaystyle\int_0^\infty\sqrt{\sum_{i=1}^L \log\mathcal{N}(\{A_i: ||A_i||_F\leq b_i\},||\cdot||_F,u_i)}du\\
\leq \displaystyle\int_0^\infty\sum_{i=1}^L\sqrt{ \log\mathcal{N}(\{A_i: ||A_i||_F\leq b_i\},||\cdot||_F,u_i)}du
\end{array}.
\end{equation*}
Since $A_i\in \mathbb{R}^{d_i\times d_{i-1}}$, we can regard $A_i$ as a vector in $\mathbb{R}^{m}$ with $m=d_i\cdot d_{i-1}$ and $||\cdot||_F$ as the standard Euclidean distance in $\mathbb{R}^{m}$. Then the set $\{A_i: ||A_i||_F\leq b_i\}$ forms a $b_i$-ball in $\mathbb{R}^{m}$, and the covering number for this ball could be upper bounded by
$$ \mathcal{N}(\{A_i: ||A_i||_F\leq b_i\},||\cdot||_F,u_i)\leq \left(\dfrac{3b_i}{u_i}\right)^{m}\leq \left(\dfrac{3b_i}{u_i}\right)^{W^2}$$
for $0< u_i< b_i$, and $\mathcal{N}(\{A_i: ||A_i||_F\leq b_i\},||\cdot||_F,u_i)=1$ for $u_i\geq b_i$. So, 
\begin{equation*}
\begin{array}{l}
\displaystyle\int_0^\infty\sqrt{\log\mathcal{N}(\mathcal{F},||\cdot||_\infty,u/2)}du\\
\leq \sum_{i=1}^L \bigg(\displaystyle\int_0^\infty\sqrt{ \log\mathcal{N}(\{A_i: ||A_i||_F\leq b_i\},||\cdot||_F,u_i)}\\
\ \ \ \ \ \ \ \ \ \ \ \ \ \ \ \ \ du_i \cdot\dfrac{\dfrac{4}{\gamma}\prod_{i=1}^L\rho_is_iB}{s_ia_i}\bigg) \\
\leq \sum_{i=1}^L \bigg(\displaystyle\int_0^{b_i}\sqrt{ \log\mathcal{N}(\{A_i: ||A_i||_F\leq b_i\},||\cdot||_F,u_i)}\\
\ \ \ \ \ \ \ \ \ \ \ \ \ \ \ \ \ du_i \cdot\dfrac{\dfrac{4}{\gamma}\prod_{i=1}^L\rho_is_iB}{s_ia_i}\bigg)\\
\leq \sum_{i=1}^L \dfrac{\dfrac{4}{\gamma}\prod_{i=1}^L\rho_is_iBW}{s_ia_i} \displaystyle\int_0^{b_i}\sqrt{ \log \dfrac{3b_i}{u_i}}du_i\\
= \dfrac{12}{\gamma}\prod_{i=1}^L\rho_is_iBW\sum_{i=1}^L \dfrac{b_i}{s_ia_i} \displaystyle\int_0^{\frac{1}{3}}\sqrt{ \log \dfrac{1}{u_i}}du_i\\
\leq \dfrac{12}{\gamma}\prod_{i=1}^L\rho_is_iBW \bar{a}^2\\
\end{array},
\end{equation*}
where the last inequality uses $\displaystyle\int_0^{\frac{1}{3}}\sqrt{ \log \dfrac{1}{u_i}}du_i=\dfrac{1}{6}(2\sqrt{\log 3}+3\sqrt{\pi}\textit{erfc}(\sqrt{\log 3}))<1$. Substituting it into Corollary 1, we obtain
\begin{equation*}
\begin{array}{l}
R_{P}(f, \mathcal{B})\leq \dfrac{1}{n}\sum_{i=1}^nf(z_i)+\lambda_{f,P_n}^+\epsilon_\mathcal{B}+\sqrt{\dfrac{log(1/\delta)}{2n}}+\\
\ \ \ \ \ \ \ \ \ \ \ \ \ \ \ \ \ \ \ \ \dfrac{288}{\gamma\sqrt{n}}\prod_{i=1}^L\rho_is_iBW \left(\sum_{i=1}^L\left(\dfrac{b_i}{s_i}\right)^{1/2}\right)^2+\\
\ \ \ \ \ \ \ \ \ \ \ \ \ \ \ \ \ \ \ \ \dfrac{12\sqrt{\pi}}{\sqrt{n}}\Lambda_{\epsilon_\mathcal{B}}\cdot (2B+1)
\end{array}.
\end{equation*}
\section{Proof of Corollary 5}\label{app5}
\textit{Proof of Corollary 5.}
In order to apply Corollary 1, we need to verify the three assumptions first. Assumption 1 holds with $diam(Z)=2B$. Assumption 2 follows from the observation that $f(z)=||Tz-z||_2^2\leq B^2$. And for Assumption 3, suppose $f(z)=||Tz-z||_2^2$. Then,
\begin{equation*}
\begin{array}{l}
f(z')-f(z)\\
=||Tz'-z'||_2^2-||Tz-z||^2_2\\
\leq (B+||Tz-z||_2)(||Tz'-z'||_2-||Tz-z||_2)\\
\leq (B+||Tz-z||_2)||T(z'-z)-(z'-z)||_2\\
\leq (B+||Tz-z||_2)||T-I||_\sigma||z'-z||_2\\
= (B+||Tz-z||_2)||z'-z||_2
\end{array},
\end{equation*}
where the first inequality uses $||Tz-z||_2\leq B$ for any $z\in\mathcal{Z}$, the second inequality follows from the reverse triangle inequality and the last equality holds because $||T-I||_\sigma=1$ for any $T\in\mathcal{T}^k$. Thus Assumption 3 holds with $\lambda_{f,P_n}^+\leq \displaystyle\max_{i}\{B+||Tz_i-z_i||_2\}$. The covering numbers of $\mathcal{F}$ can be calculated as follows. Observe that
\begin{equation*}
\begin{array}{l}
||f_1-f_2||_\infty\\=\sup_{z}|f_1(z)-f_2(z)|\\
=\sup_{z}\left|||T_1z-z||_2^2-||T_2z-z||_2^2\right| \\
\leq 2B^2||T_1-T_2||_\sigma\\
\leq 2B^2||T_1-T_2||_F\\
\leq 2B^2||U_1U_1^T-U_2U_2^T||_F\\
\leq 2B^2||U_1U_1^T-U_1U_2^T+U_1U_2^T-U_2U_2^T||_F\\
\leq 4B^2||U_1-U_2||_F
\end{array},
\end{equation*}
where the second inequality uses $||\cdot||_\sigma\leq ||\cdot||_F$ and the last inequality follows from the fact that $||U_1(U_1^T-U_2^T)||_F=||(U_1-U_2)U_2^T)||_F=||U_1-U_2||_F$. Then, the covering number $\mathcal{N}(\mathcal{F},||\cdot||_\infty,u/2)$ can be bounded by
\begin{equation*}
\begin{array}{l}
\displaystyle\int_0^\infty\sqrt{\log\mathcal{N}(\mathcal{F},||\cdot||_\infty,u/2)}du\\
\displaystyle\leq \int_0^\infty\sqrt{\log\mathcal{N}\left(\mathcal{T}^k,||\cdot||_F,\frac{u}{4B^2}\right)}du\\
\leq \displaystyle\int_0^\infty\sqrt{ \log\mathcal{N}(\{U\in \mathbb{R}^{m\times k}: ||U||_F\leq\sqrt{k}\},||\cdot||_F,\frac{u}{8B^2})}du
\end{array},
\end{equation*}
where the last inequality is due to the monotonicity of covering numbers. And the covering number for $\{U\in \mathbb{R}^{m\times k}: ||U||_F\leq\sqrt{k}\}$ could be bounded in the following way:
$$\mathcal{N}\bigg(\{U\in \mathbb{R}^{m\times k}: ||U||_F\leq\sqrt{k}\},||\cdot||_F,\frac{u}{8B^2}\bigg)\leq \bigg(\dfrac{24B^2\sqrt{k}}{u}\bigg)^{mk}$$
for $\dfrac{u}{8B^2}< \sqrt{k}$, and $\mathcal{N}\big(\{U\in \mathbb{R}^{m\times k}: ||U||_F\leq\sqrt{k}\},||\cdot||_F,\frac{u}{8B^2}\big)=1$ for $\dfrac{u}{8B^2}\geq \sqrt{k}$. So,
\begin{equation*}
\begin{array}{l}
\displaystyle\int_0^\infty\sqrt{\log\mathcal{N}(\mathcal{F},||\cdot||_\infty,u/2)}du\\
\displaystyle\leq \int_0^{8B^2\sqrt{k}}\sqrt{mk\log \dfrac{24B^2\sqrt{k}}{u}}du\\
\leq 24B^2\sqrt{m}k
\end{array},
\end{equation*}
where the last inequality uses $\displaystyle\int_0^{\frac{1}{3}}\sqrt{ \log \dfrac{1}{u}}du<1$. Substituting it into Corollary 1, we obtain the desired result
\begin{equation*}
\begin{array}{rl}
R_{P}(f, \mathcal{B})\leq& \dfrac{1}{n}\sum_{i=1}^nf(z_i)+\lambda_{f,P_n}^+\epsilon_\mathcal{B}+\dfrac{576B^2k\sqrt{m}}{\sqrt{n}}+\\
&\dfrac{24B\sqrt{\pi}}{\sqrt{n}}\Lambda_{\epsilon_\mathcal{B}}+B^2\sqrt{\dfrac{log(1/\delta)}{2n}}
\end{array}.
\end{equation*}
\end{document}